\definecolor{mygray}{gray}{0.6}
\theoremstyle{plain}
\newtheorem{theorem}{Theorem}[section]
\newtheorem{lemma}[theorem]{Lemma}
\theoremstyle{definition}
\newtheorem{assumption}[theorem]{Assumption}
\theoremstyle{remark}
\newtheorem{remark}[theorem]{Remark}
\begin{document}

\title{CCFC++: Enhancing Federated Clustering \\through Feature Decorrelation}

\author{Jie Yan,
        Jing Liu,
        Yi-Zi Ning and Zhong-Yuan Zhang\IEEEauthorrefmark{1}
\IEEEcompsocitemizethanks{\IEEEcompsocthanksitem *Corresponding author. E-mail addresses: zhyuanzh@gmail.com.
}}



\maketitle

\begin{abstract}
In federated clustering, multiple data-holding clients collaboratively group data without exchanging raw data. This field has seen notable advancements through its marriage with contrastive learning, exemplified by Cluster-Contrastive Federated Clustering (CCFC). However, CCFC suffers from heterogeneous data across clients, leading to poor and unrobust performance. Our study conducts both empirical and theoretical analyses to understand the impact of heterogeneous data on CCFC. Findings indicate that increased data heterogeneity exacerbates dimensional collapse in CCFC, evidenced by increased correlations across multiple dimensions of the learned representations. To address this, we introduce a decorrelation regularizer to CCFC. Benefiting from the regularizer, the improved method effectively mitigates the detrimental effects of data heterogeneity, and achieves superior performance, as evidenced by a marked increase in NMI scores, with the gain reaching as high as 0.32 in the most pronounced case.
\end{abstract}

\begin{IEEEkeywords}
Federated clustering, contrastive clustering, data heterogeneity, feature decorrelation.
\end{IEEEkeywords}

\section{Introduction}
\IEEEPARstart{F}{ederated} clustering (FC) extends traditional centralized clustering to federated scenarios, enabling multiple data-holding clients to collaboratively group data without sharing their raw data. It has gained relevance in applications such as client selection \cite{fu2023client} and personalization \cite{long2023multi, cho2023communication}. Naturally, adapting centralized clustering methodologies for federated scenarios has been a focus in this field.

In centralized clustering, significant progress has been largely attributed to the incorporation of representation learning techniques \cite{zhou2022comprehensive}. Parallel to this, FC has witnessed substantial advancements through its integration with representation learning, exemplified by Cluster-Contrastive Federated Clustering (CCFC)\cite{yan2024ccfc}. CCFC, which synergizes FC with contrastive learning \cite{oord2018representation, bachman2019learning}, has demonstrated marked improvements in clustering performance. However, this performance is adversely affected by data heterogeneity across clients, deteriorating with the increasing degree of data heterogeneity.

To comprehensively ascertain the impact of data heterogeneity on CCFC, we conducted both empirical and theoretical analyses comparing the representations learned under varying heterogeneity levels. These analyses consistently show that increased data heterogeneity exacerbates \textit{dimensional collapse} in CCFC, evidenced by heightened correlations across multiple dimensions of the learned representations (see \cref{case1}). Indeed, low inter-correlation across multiple dimensions of the learned representations is pivotal for the efficacy of various learning tasks, including clustering \cite{von2007tutorial, tao2021clustering}, self-supervised learning \cite{zbontar2021barlow, hua2021feature}, class incremental learning \cite{shi2022mimicking} and federated classification\cite{shi2023understanding}. Recognizing this, we propose a strategy to counter the adverse effects of data heterogeneity by addressing the dimensional collapse in the learned representations. We introduce a tailored decorrelation regularizer into the CCFC framework, resulting in an enhanced version named \textbf{CCFC++}.

\begin{figure}
\centering
\includegraphics[height = 6cm, width = 8cm]{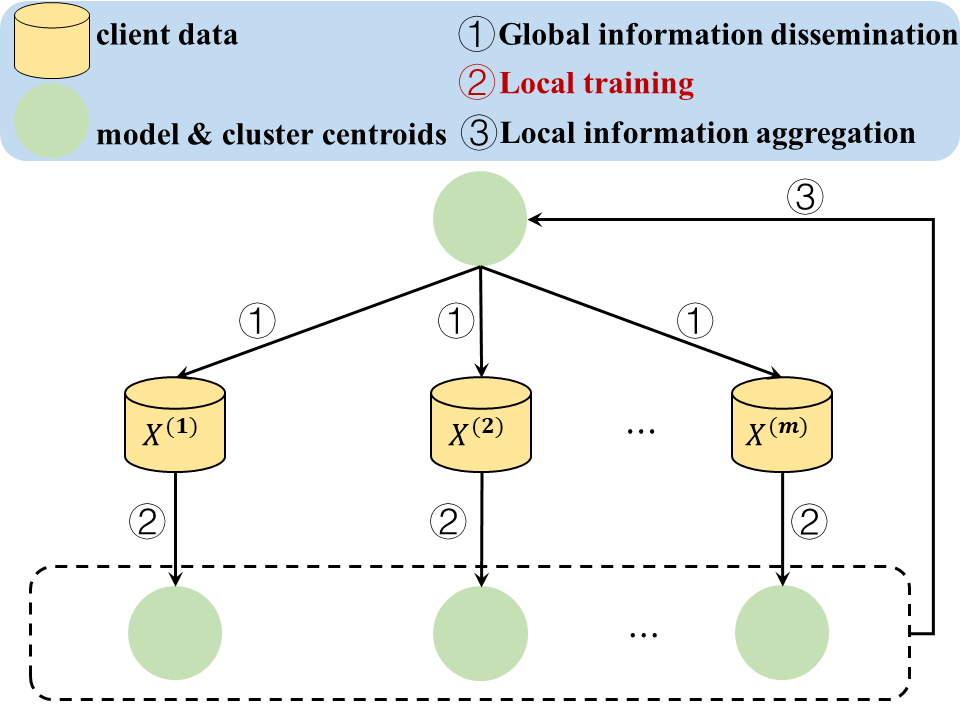}
\caption{\textbf{CCFC architecture.} In this work, we focus on the second step.}
\label{ccfc_arch}
\end{figure}

\begin{figure*}[!t]
\centering
\subfigure{
\includegraphics[height = 3.2cm, width = 3.1cm]{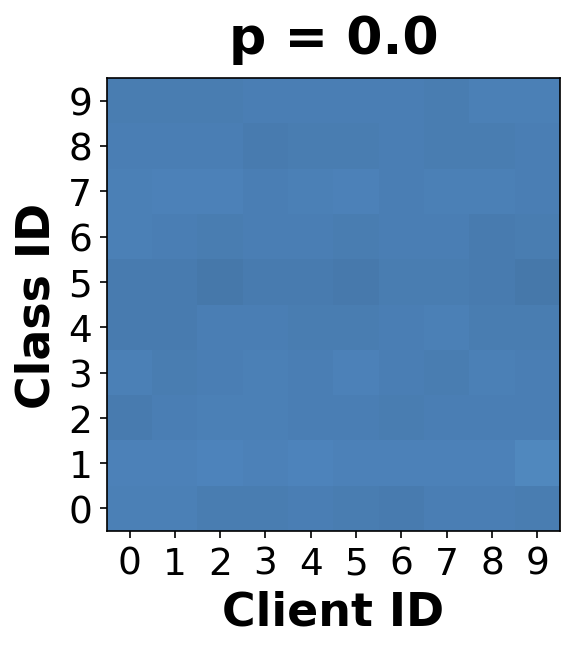}}
\quad
\subfigure{
\includegraphics[height = 3.2cm, width = 3.1cm]{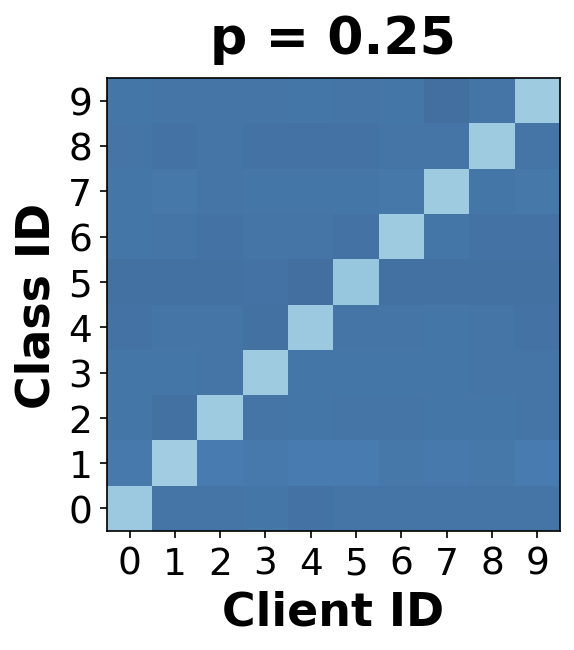}}
\,
\subfigure{
\includegraphics[height = 3.2cm, width = 3.1cm]{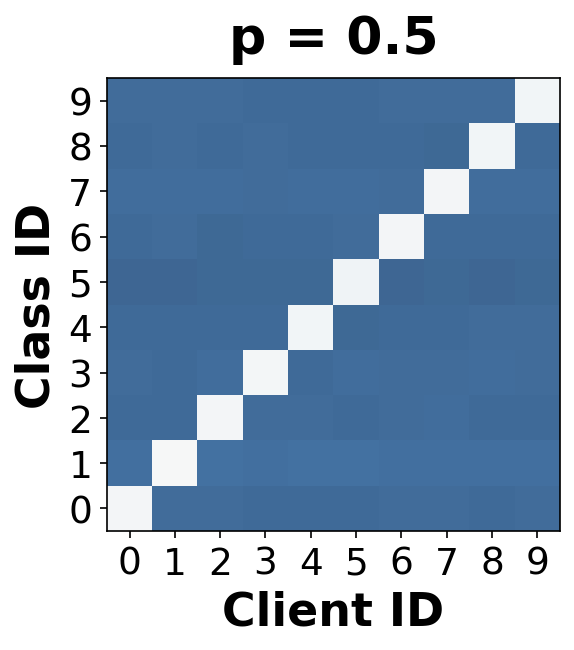}}
\,
\subfigure{
\includegraphics[height = 3.2cm, width = 3.1cm]{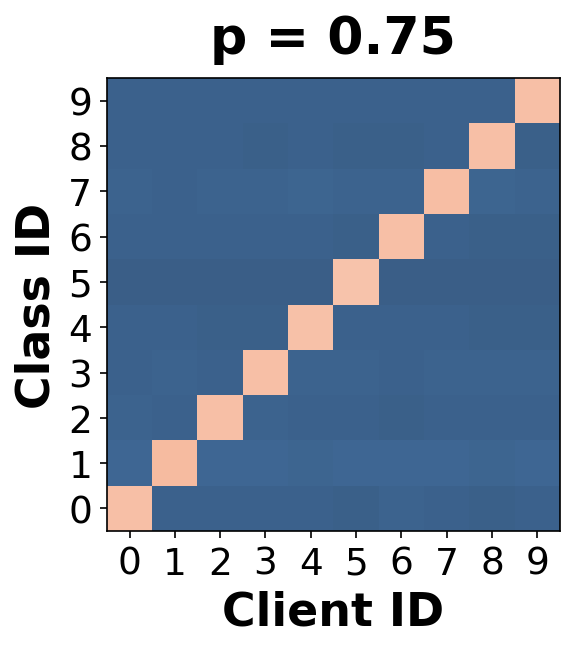}}
\,
\subfigure{
\includegraphics[height = 3.2cm, width = 3.4cm]{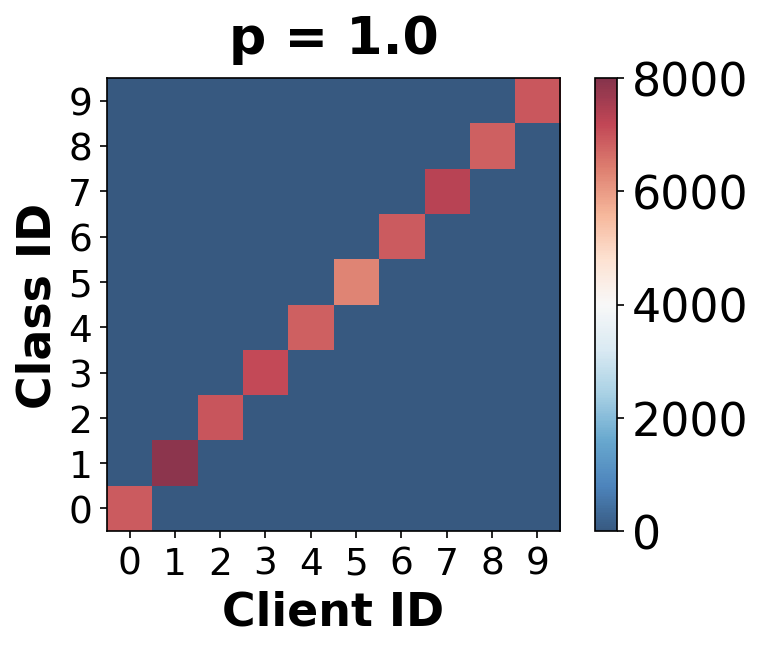}}

\subfigure{
\includegraphics[height = 3.2cm, width = 3.1cm]{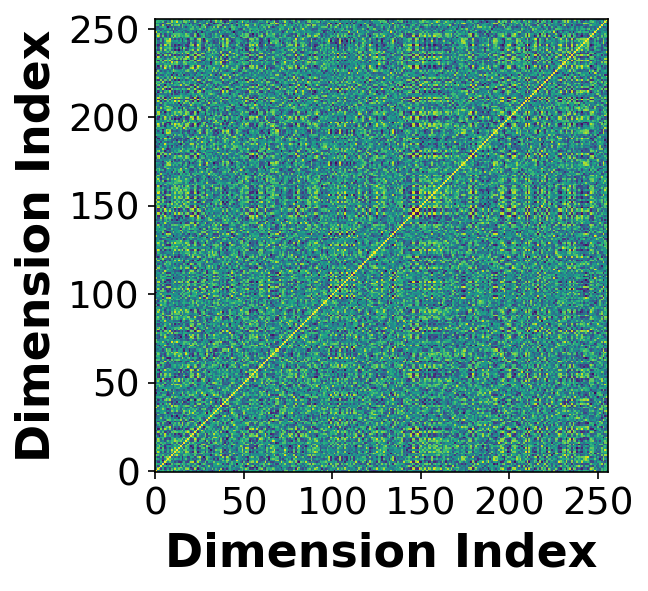}}
\,
\subfigure{
\includegraphics[height = 3.2cm, width = 3.1cm]{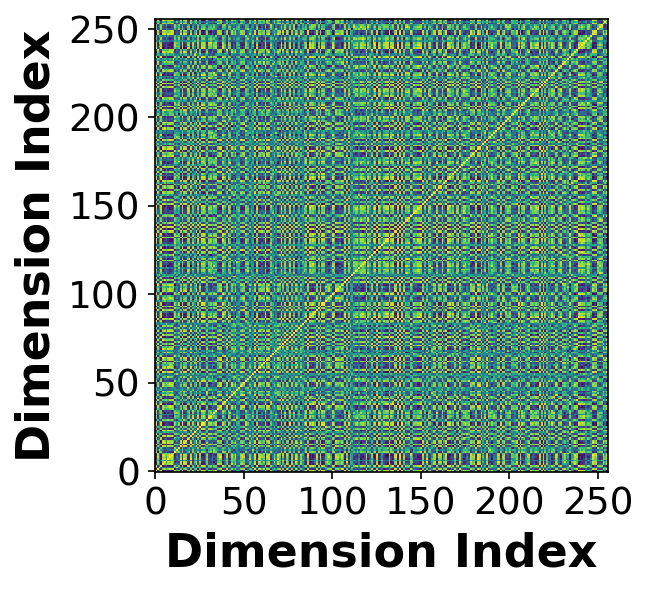}}
\,
\subfigure{
\includegraphics[height = 3.2cm, width = 3.1cm]{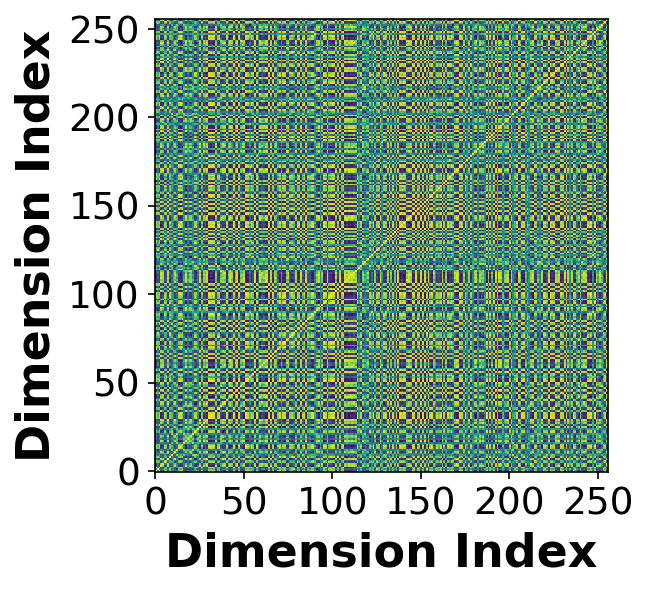}}
\,
\subfigure{
\includegraphics[height = 3.2cm, width = 3.1cm]{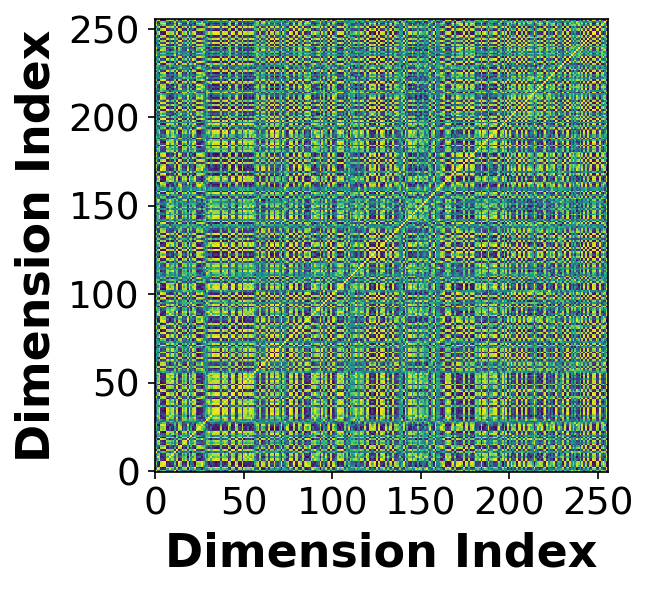}}
\,
\subfigure{
\includegraphics[height = 3.2cm, width = 3.4cm]{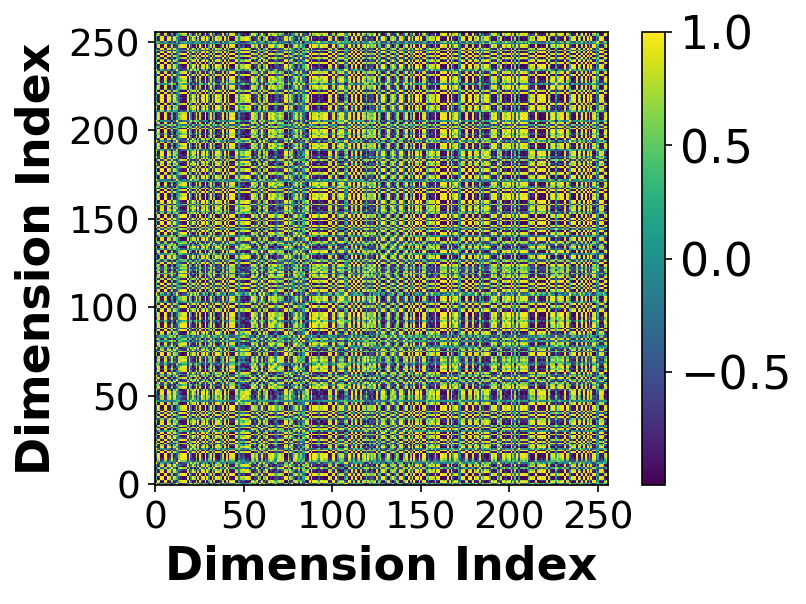}}

\subfigure{
\includegraphics[height = 3.2cm, width = 3.1cm]{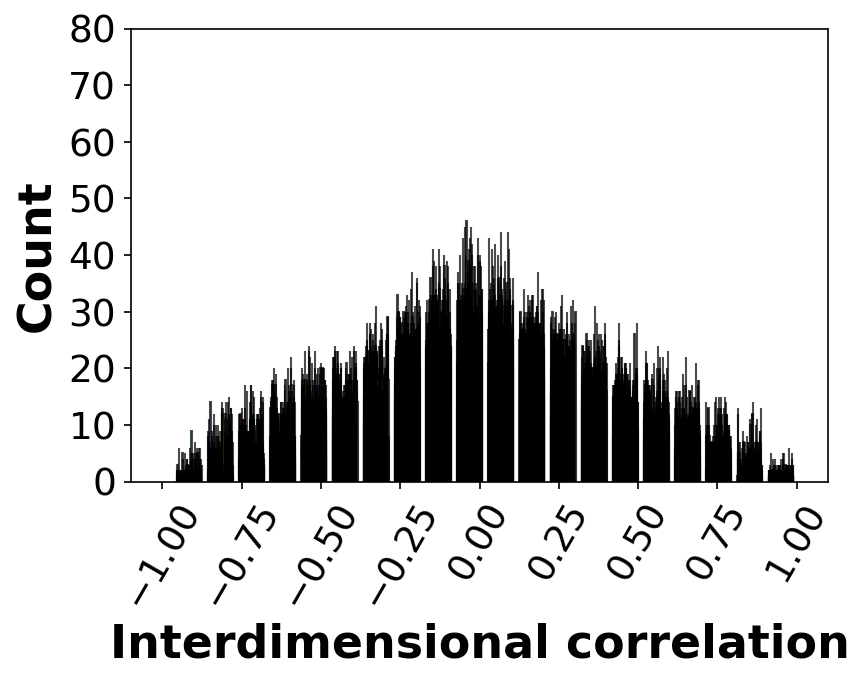}}
\,
\subfigure{
\includegraphics[height = 3.2cm, width = 3.1cm]{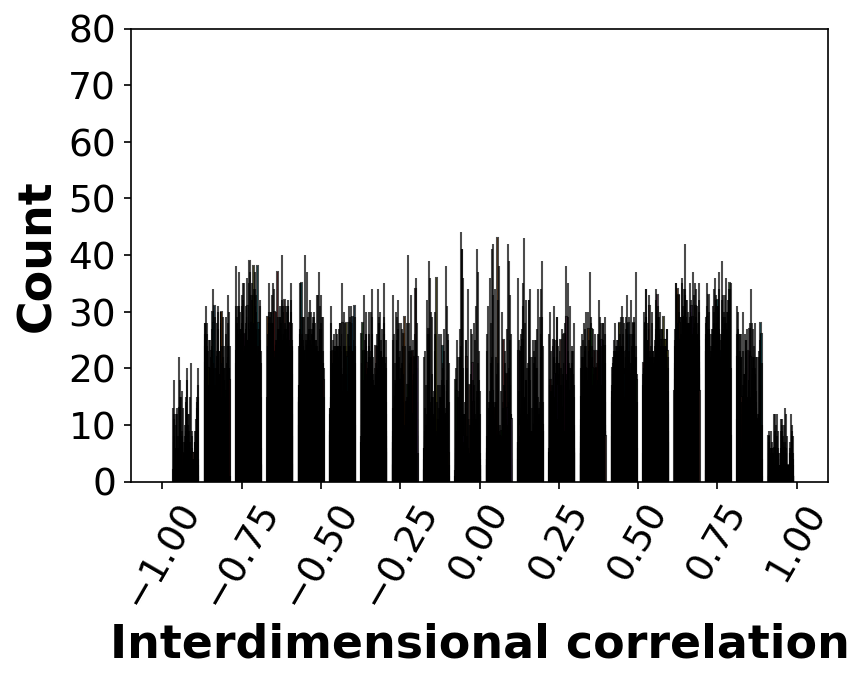}}
\,
\subfigure{
\includegraphics[height = 3.2cm, width = 3.1cm]{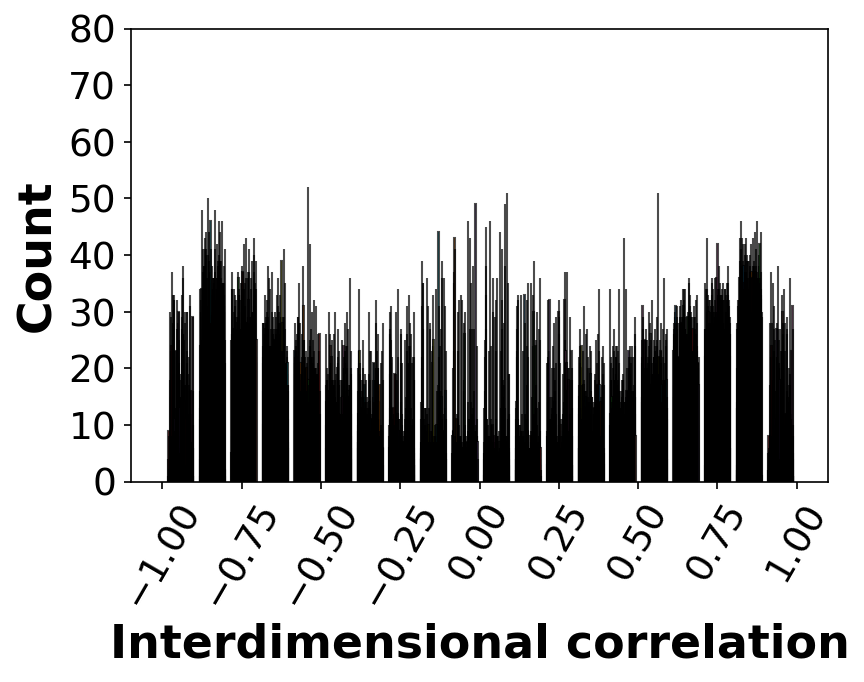}}
\,
\subfigure{
\includegraphics[height = 3.2cm, width = 3.1cm]{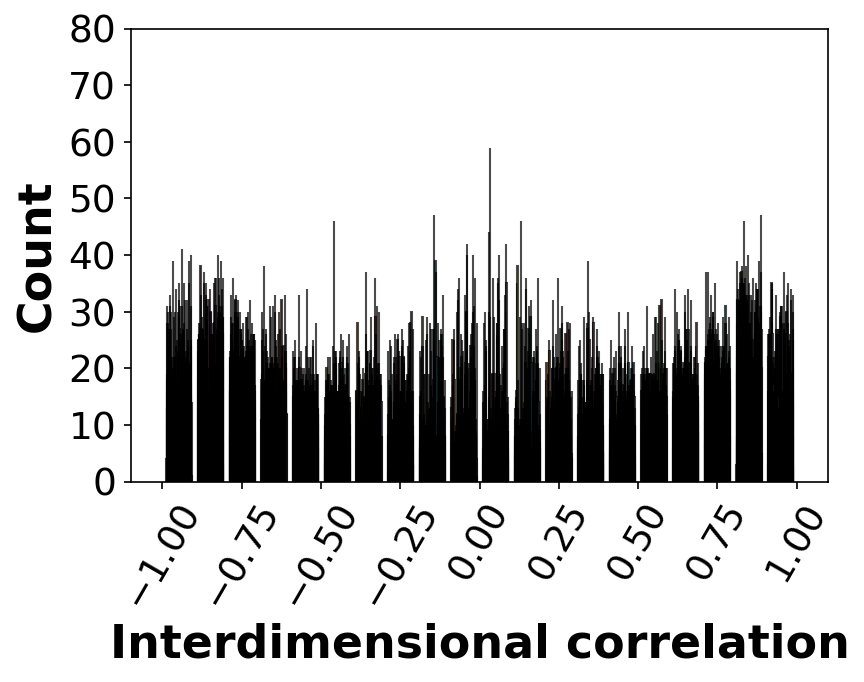}}
\,
\subfigure{
\includegraphics[height = 3.2cm, width = 3.1cm]{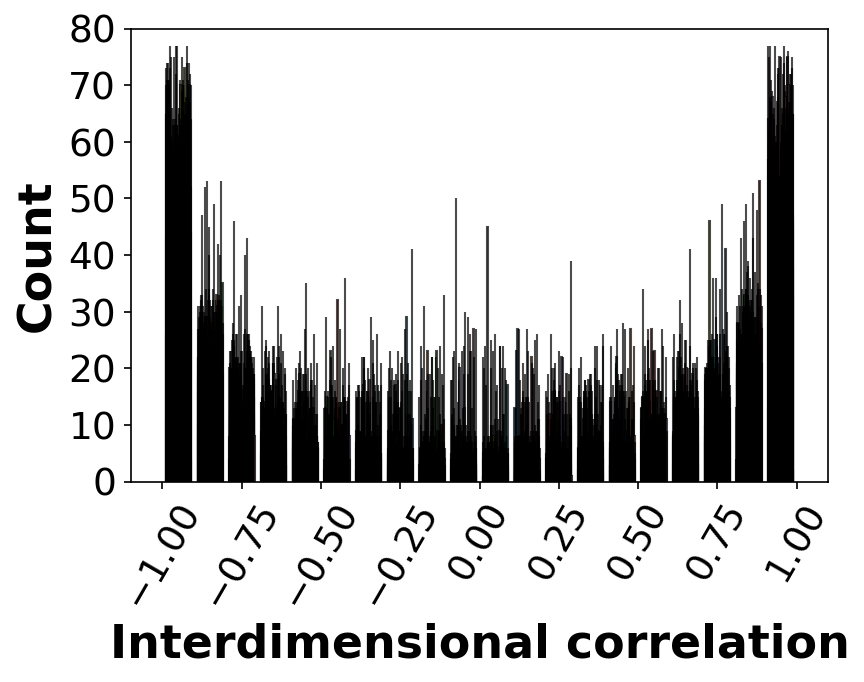}}

\caption{\textbf{The learned representations of CCFC under different simulated federated scenarios on MNIST (best viewed in color)}. \textbf{The first row} displays the local data distributions of each client under different levels of data heterogeneity. The color bar denotes the number of samples, and $p$ denotes the imbalance in classes across different clients. A larger $p$ implies stronger data heterogeneity. \textbf{The second row} shows the covariance matrices of the learned representations by CCFC in the corresponding federated scenarios. \textbf{The last row} showcases the distribution of interdimensional correlations in the corresponding covariance matrices.}
\label{case1}
\end{figure*}

Comprehensive experiments reveal that: 1) The decorrelation regularizer effectively mitigates the dimensional collapse, leading to more clustering-friendly representations. 2)CCFC++ significantly mitigates the negative impact of data heterogeneity, thereby achieving enhanced performance. In the most notable case, this modification led to an increase in the Normalized Mutual Information (NMI) \cite{strehl2002cluster} score by up to 0.32 and an improvement in the Kappa \cite{liu2019evaluation} score by as much as 0.27. 3) Beyond the data heterogeneity, this regularizer also demonstrates beneficial in handling systems heterogeneity \cite{li2020federated}. In summary, our contributions are threefold:
\begin{itemize}
\item
We provide both empirical and theoretical insights into how increased data heterogeneity exacerbates dimensional collapse in CCFC.

\item
Based on these insights, we enhance CCFC with a tailored decorrelation regularizer to address the challenges posed by data heterogeneity.

\item
We validate the effectiveness of the decorrelation regularizer and CCFC++ through extensive experiments.
\end{itemize}

\section{Background and related work}
Clustering, a cornerstone of unsupervised learning, has traditionally been studied within centralized scenarios where data aggregation on a central server is a foundational assumption \cite{ezugwu2022comprehensive, zhou2022comprehensive}. However, in many practical scenarios, data resides across multiple isolated clients, and privacy concerns often preclude the sharing or aggregation of this local data. Relying solely on local data for clustering tasks proves insufficient \cite{stallmann2022towards, yan2024ccfc}.

To address these, federated clustering (FC) has emerged, allowing multiple clients to collaboratively group data without exchanging raw data. As an extension of centralized clustering, FC inherently follows an exploratory trajectory, extending methodologies from centralized clustering to federated scenarios. Notable extensions include k-FED \cite{dennis2021heterogeneity} and SDA-FC-KM \cite{sdafc}, outgrowths of k-means \cite{lloyd1982least}; FFCM \cite{stallmann2022towards} and SDA-FC-FCM \cite{sdafc}, extensions of fuzzy c-means \cite{bezdek1984fcm}; and PPFC-GAN \cite{ppfcgan}, derived from DCN \cite{yang2017towards}. Although these extensions have advanced FC, a gap between FC and centralized clustering persists \cite{yan2024ccfc}.


A key driver of success in centralized clustering has been the integration of representation learning techniques \cite{zhou2022comprehensive}. Extending this methodology to FC offers a pathway to bridge this performance gap. In this vein, Cluster-Contrastive Federated Clustering (CCFC) \cite{yan2024ccfc} has been a pioneering approach, bridging FC with contrastive learning to achieve notable improvements in clustering performance. However, CCFC's effectiveness is compromised by data heterogeneity across clients, deteriorating with the increasing degree of data heterogeneity (\cref{NMI}).

In this work, building upon our exploration of how heterogeneous data affects CCFC, we ascertain that the adverse impact of data heterogeneity can be significantly mitigated through the incorporation of a decorrelation regularizer, diminishing the interdimensional correlation within the learned representations. We call this improved version of CCFC as CCFC++.

\section{CCFC++: unleashing CCFC's potential through feature decorrelation}

This section commences with a concise presentation of some preliminaries related to CCFC, followed by a comprehensive analysis of how heterogeneous data influences CCFC from both empirical and theoretical perspectives. Finally, we improve CCFC through the incorporation of a tailored regularizer.

\subsection{CCFC}
\subsubsection{Overview of the CCFC Architecture}
Given a real-world dataset $X$ distributed among $m$ clients, i.e., $X=\bigcup_{l=1}^{m} X^{(l)}$. Our goal is to divide the samples into $k$ clusters, with high intra-cluster similarity and low inter-cluster similarity, without sharing the raw data.

As shown in \cref{ccfc_arch}, throughout the entire training process of CCFC, the only shared information between clients and the server are models and cluster centroids, thereby safeguarding data privacy. The training process in CCFC involves three primary steps per communication round:
\begin{itemize}
\item
1) Global information dissemination. Each client downloads the global information from the server and updates their local models accordingly.

\item
2) Local training. Each client first assign their local data to the nearest global cluster centroid, followed by local model training using these labeled data. Then, each client also computes $k$ local cluster centroids using k-means (KM) \cite{lloyd1982least} to the learned representations, capturing local semantic information.

\item
3) Local information aggregation. Each client uploads their local information to the server, where the local models are aggregated into a new global model through weighted averaging, and the local cluster centroids are aggregated into $k$ new global cluster centroids using KM, for the next communication round.
\end{itemize}
When we complete the scheduled communication rounds, the final clustering result can be obtained by assigning data to the closest global cluster centroid. In this work, we will focus on the local training step, since the representation learning process mainly occurs in this step.


\subsubsection{The local training step}
In CCFC, the model for sharing and training is a tailored cluster-contrastive model, which aims to learn cluster-invariant representations, meaning that samples within the same cluster should have similar representations. The cluster-contrastive model $w$ comprises a encoder $f$ and an MLP predictor $h$. The encoder $f$ comprises a backbone (e.g., ResNet-18 \cite{he2016deep}) and an MLP projector \cite{chen2020simple}.

Given the downloaded global model $w^{(g)} = (f^{(g)},\, h^{(g)})$, $k$ global cluster centroids $\{\eta^{(g,\, c)}\}_{c = 1}^{k}$, and the updated local model $w^{(l)} = (f^{(l)},\, h^{(l)})$. The local data of each client $l$ can be labeled with the index of the nearest global centroid:
\begin{equation}
\underset{c=\{1,\, \cdots,\, k\}}{\arg \min }\left\|f^{(g)}(x) - \eta^{(g,\, c)}\right\|_{2},
\label{label_assign}
\end{equation}
where $\|\cdot\|_2$ is $\ell_2$-norm.
Then, each client $l$ trains $w^{(l)}$ with their local data $X^{(l)}$ and the labeled results. The loss function is defined as:
\begin{align}
\ell = &\sum_{c = 1}^{k}\frac{1}{kn_c^2}\sum_{i = 1}^{n_c}\sum_{j = 1}^{n_c} D(p_{i}^{(c)},\, stopgrad(z_{j}^{(c)}))  \nonumber\\
&+\sum_{c = 1}^{k} \frac{\lambda}{kn_c} \sum_{i = 1}^{n_c}D(p_{i}^{(c)},\, stopgrad(p_{i}^{(g,\, c)})),
\label{my_sim}
\end{align}
where $D(\cdot,\, \cdot)$ is the negative cosine similarity function, the stop-gradient operation ($stopgrad(\cdot)$) is an critical component to avoid model collapse \cite{yan2024ccfc}, $n_c$ is the number of samples in the $c$-th cluster, $p_{i}^{(c)}  = h^{(l)}(f^{(l)}(x_{i}^{(c)}))$ and $p_{i}^{(g,\, c)} = h^{(g)}(f^{(g)}(x_{i}^{(c)}))$ are the predictions of sample $x_{i}^{(c)}$ for the latent representations of samples $\{x_{j}^{(c)}\}_{j = 1}^{n_c}$ within the same cluster, $z_{j}^{(c)} = f^{(l)}(x_{j}^{(c)})$ is the latent representation of sample $x_{j}^{(c)}$, and $\lambda$ is the tradeoff hyperparameter. By minimizing \cref{my_sim}, the first item will encourage the local model to learn cluster-invariant representations for samples within the same cluster, and the second one will encourage the local model not to deviate too far from the global model.

Despite CCFC showcases substantial enhancement in clustering performance, the clustering performance suffers from the data heterogeneity problem, deteriorating with the increasing degree of data heterogeneity. To comprehensively ascertain how heterogeneous data affects CCFC, we will empirically and theoretically compare the representations learned by CCFC under different levels of data heterogeneity.


\subsection{Empirical observations on the global model}
\label{cg}
We first empirically demonstrate the dimensional collapse problem on the global model. Specifically, we first partition the samples of MNIST into $k^\star$ subsets corresponding to different clients, where $k^\star$ is the number of true clusters ($k^\star = 10$ for MNIST).  Then, following \cite{chung2022federated, yan2024ccfc}, we simulate federated scenarios with varying class imbalances across clients, controlled by a hyperparameter $p$. For the $l$-th client with $s$ images, $p\cdot s$ images are sampled from the $l$-th cluster, while the remaining $(1 - p) \cdot s$ images are drawn randomly from the entire data. The hyperparameter $p$ varies from 0 (data is randomly distributed among $m$ clients) to 1 (each client forms a cluster). We let $p \in \{0,\, 0.25,\, 0.5,\, 0.75,\, 1\}$.

For each federated scenario, we first train a CCFC, then calculate the covariance matrix of the representations learned by the global model, and visualize this matrix and the distribution of elements within it. As shown in \cref{case1}, under different levels of data heterogeneity, the learned representations \textit{all} face the problem of \textit{dimensional collapse},  i.e. multiple dimensions of the learned representations exhibit correlations. And this problem exacerbates with the increase in data heterogeneity.

To provide a more comprehensive characterization of the distribution of the learned representations, we also perform singular value decomposition (SVD) on these covariance matrices, and visualize the singular values. \cref{case2} further corroborates the observations in \cref{case1}.

\begin{figure}[!t]
\centering
\includegraphics[height = 6cm, width = 7cm]{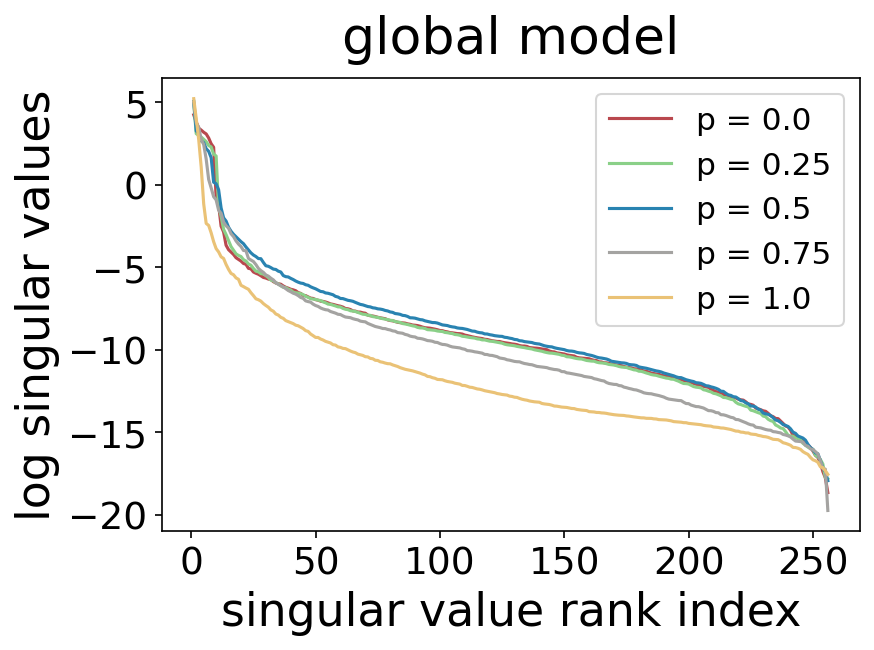}

\caption{\textbf{Dimensional collapse on the global model}. There are a considerable number of singular values collapsing to zero for all scenarios, implying collapsed dimensions. And this problem exacerbates with the increase in data heterogeneity.}
\label{case2}
\end{figure}

\subsection{Empirical observations on local models}
Since the global model is derived by aggregating local models, we conjecture that the dimensional collapse observed on the global model is attributable to the analogous problem on local models.

To substantiate this, we also empirically demonstrate the dimensional collapse problem on local models. Specifically, we use the same settings of federated scenarios and model training as in \cref{cg}. For each federated scenario, we first calculate the covariance matrix of the representations learned by a randomly selected local model. Then, we perform SVD on these covariance matrices, and visualize the singular values. As shown in \cref{case3}, we also observe similar collapse problem on the local model, which corroborates our conjecture.

\subsection{Theoretical analysis for dimensional collapse}
Based on our empirical observations, we have corroborated that the problem of dimensional collapse on the global model stems from local models. Hence, in this section, we focus on analyzing the dimensional collapse on local models, and theoretically elucidate how increased data heterogeneity causes the model weights to evolve into low-rank, leading to more severe dimensional collapse.

\subsubsection{Setups and notations}
For the generality and simplicity, our subsequent analyses will delve into the local training of an arbitrary client, disregarding the client ID. Given $k$ clusters obtained by labeling the local data with the index corresponding to the nearest global cluster centroid, each one comprises $n_c$ $(c \in [k] = \{1,\, 2,\, \cdots,\, k\})$ samples. We denote the collection of samples in cluster $c$ as $X^{(c)} = [x_{1}^{(c)},\, x_{2}^{(c)},\, \cdots,\, x_{n_c}^{(c)}] \in \mathbb{R}^{d \times n_c}$, the $o$-th feature of $x_{i}^{(c)}$ as $x_{oi}^{(c)} \in \mathbb{R}$, the collection of the $o$-th features in cluster $c$ as $x_{o\cdot}^{(c)} \in \mathbb{R}^{1 \times n_c}$ (i.e. the $o$-th row vector of $X^{(c)}$), where $d$ is the dimension of the sample, $o \in [d], i \in [n_c]$.

To analyze the dynamics of neural networks, a commonly used framework is gradient flow dynamics \cite{arora2018optimization, arora2019implicit, jing2021understanding, shi2023understanding}. Following these works, we also assume the cluster-contrastive model is a multi-layer linear neural network. The model comprises a linear neural network with $L_1 + L_2$ layers ($L_1 \geq 1$ and $L_2 \geq 1$), wherein the first $L_1$ layers correspond to the encoder, and the last $L_2$ layers correspond to the predictor.

At the optimization time step $t$, we denote the weight matrix of the $i$-th layer as $W_i(t)$, where $i \in [L_1 + L_2]$. Then, the weight matrices of the encoder and the predictor are respectively denoted as:
\begin{align}
\Pi(t) &= W_{L_1}(t)W_{L_1-1}(t)\cdots W_{1}(t) \in \mathbb{R}^{d' \times d},
\label{pi1}\\
\Phi(t)&= W_{L_2}(t)W_{L_2-1}(t)\cdots W_{L_1 + 1}(t) \in \mathbb{R}^{d' \times d'},
\label{pi2}
\end{align}
where $d'$ is the dimension of both the latent representation and the prediction.
We denote the local representations of $X^{(c)}$ as $Z^{(c)}(t) = \Pi(t) X^{(c)} = [z_{1}^{(c)}(t),\, z_{2}^{(c)}(t),\, \cdots,\, z_{n_c}^{(c)}(t)] \in \mathbb{R}^{d' \times n_c}$, the corresponding predictions as $P^{(c)}(t) = \Phi(t) Z^{(c)}(t) = \Phi(t)\Pi(t) X^{(c)} = [p_{1}^{(c)}(t),\, p_{2}^{(c)}(t),\, \cdots,\, p_{n_c}^{(c)}(t)] \in \mathbb{R}^{d' \times n_c}$. Similarly, the global predictions of $X^{(c)}$ can be denoted as:  $P^{(g,\, c)} = \Phi^{(g)} Z^{(g,\, c)} = \Phi^{(g)}\Pi^{(g)} X^{(c)} = [p_{1}^{(g,\, c)},\, p_{2}^{(g,\, c)},\, \cdots,\, p_{n_c}^{(g,\, c)}] \in \mathbb{R}^{d' \times n_c}$, where $\Phi^{(g)}$ and $\Pi^{(g)}$, the global weight matrices received from the server, remain fixed during the local training process. We summarize these notations in \cref{notations} of the \cref{prove}.

The gradient descent dynamics of  $\Pi$ and $\Phi$ are respectively denoted as:
\begin{align}
\dot{\Pi}(t) &= -\frac{\partial \ell(\Pi(t),\, \Phi(t))}{\partial \Pi},\\
\dot{\Phi}(t) &= -\frac{\partial \ell(\Pi(t),\, \Phi(t))}{\partial \Phi},
\end{align}
where $\ell$ is the loss function defined in \cref{my_sim}.


\begin{figure}[!t]
\centering
\includegraphics[height = 6cm, width = 7cm]{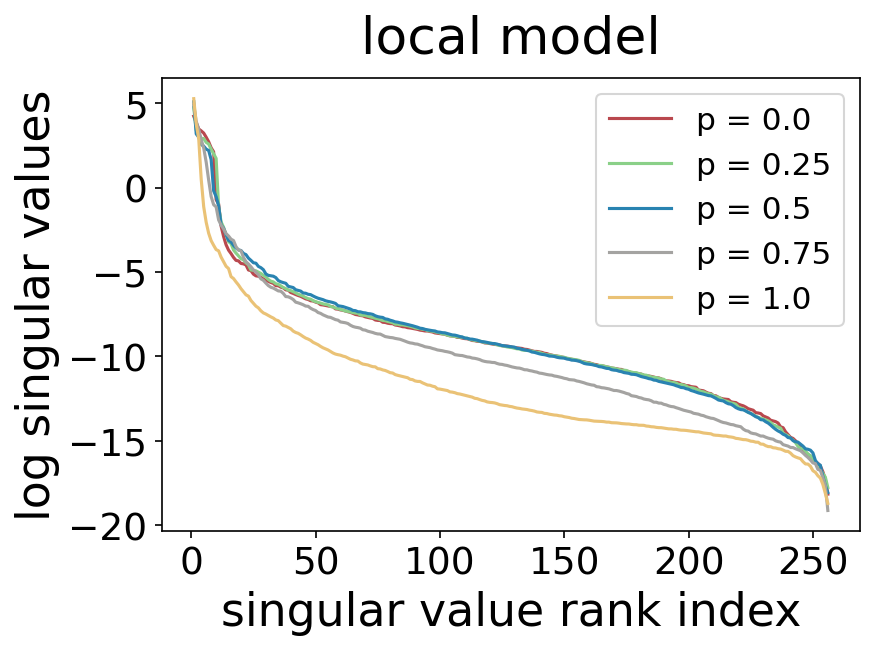}

\caption{\textbf{Dimensional collapse on the local model}. There are a considerable number of singular values collapsing to zero for all scenarios, implying collapsed dimensions. And this problem exacerbates with the increase in data heterogeneity.}
\label{case3}
\end{figure}

\subsubsection{Analysis on gradient flow dynamics}
Since our goal is to analyze the learned representations and it is directly produced by the encoder, we focus on the dynamic evolution of the weight matrix $\Pi(t)$. Specifically, we derive the dynamic evolution of the singular values of $\Pi(t)$, as shown in the theorem below.

\begin{assumption}
Assuming at the optimization time step $0$, $W_{i}(0)(W_{i}(0))^{\top}=(W_{i + 1}(0))^{\top}W_{i + 1}(0)$ holds for any layer $i \in [L_1 - 1]$.
\label{ass1}
\end{assumption}

\begin{assumption}
Assuming $|(u_{\tau}^{\Pi}(t))^\top v^{\Phi}_{{\tau}'}(t)|=\mathbbm{1}\{{\tau} = {\tau}'\}$ holds for any optimization time step, where $u_{\tau}^{\Pi}(t)$ is the $\tau$-th left singular vector of $\Pi(t)$ and $v_{{\tau}'}^{\Phi}(t)$ is the $\tau'$-th right singular vector of $\Phi(t)$.
\label{ass2}
\end{assumption}
\begin{remark}
\cref{ass1} can be achieved through appropriate weight initialization methods. And \cref{ass2} can be achieved by the gradient descent optimization under some assumptions \cite{ji2018gradient}.
\end{remark}

\begin{theorem}
Under assumptions \cref{ass1} and \cref{ass2}, the gradient descent dynamics of the $\tau$-th largest singular value $\sigma_{\tau}^{\Pi}(t)$ of $\Pi(t)$ can be expressed as:
\begin{align}
\dot{\sigma}_{\tau}^{\Pi}(t)  = &L_1(\sigma_\tau^{\Pi}(t))^{2-\frac{2}{L_1}}
\sqrt{(\sigma_{\tau}^{\Pi}(t))^{\frac2{L_1}} + C} \nonumber\\
& \times
(u_{\tau}^{\Phi}(t))^\top
\bar{Q}(t)
v_\tau^{\Pi}(t),
\label{eq_theory}
\end{align}
where $u_{\tau}^{\Phi}(t)$ is the $\tau$-th left singular vector of $\Phi(t)$, $v_{{\tau}}^{\Pi}(t)$ is the $\tau$-th right singular vector of $\Pi(t)$, $C$ is a constant,
\begin{equation}
\bar{Q}(t) = \frac1k\sum_{c = 1}^{k}Q^{(c)}(t) (X^{(c)})^\top,
\label{Q}
\end{equation}
the element $q_{ri}^{(c)}(t) \in \mathbb{R}$ located in the $r$-th $(r \in [d'])$ row and $i$-th $(i \in [n_c])$ column of $Q^{(c)}(t) \in \mathbb{R}^{d' \times n_c}$ is denoted as:
$q_{ri}^{(c)}(t) = \left(\frac{1}{n_c^2}\sum_{j = 1}^{n_c} \hat{z}_{rj}^{(c)}(t) + \frac{\lambda}{n_c} \hat{p}_{ri}^{(g,\, c)}  \right)
\cdot \frac{1 - (\hat{p}_{ri}^{(c)}(t))^2}{\|p_{i}^{(c)}(t)\|_2}$, $\hat{p}_{ri}^{(c)}(t) = \frac{p_{ri}^{(c)}(t)}{\|p_{i}^{(c)}(t)\|_2} \in \mathbb{R}$,
$\hat{z}_{rj}^{(c)}(t) = \frac{z_{rj}^{(c)}(t)}{\|z_{j}^{(c)}(t)\|_2} \in \mathbb{R}$, and $\hat{p}_{ri}^{(g,\, c)} = \frac{p_{ri}^{(g,\, c)}}{\|p_{i}^{(g,\, c)}\|_2} \in \mathbb{R}$.
\label{theory}
\end{theorem}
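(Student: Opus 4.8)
The plan is to derive the gradient-flow evolution of the encoder product $\Pi(t)$ in closed form and then read off the dynamics of its singular values. I would work in four stages: (i) compute the end-to-end gradient $\partial\ell/\partial\Pi$ and recognize $\bar{Q}(t)$ inside it; (ii) use the balancedness invariant of \cref{ass1} to put $\dot\Pi(t)$ into the standard Arora-type closed form and pass to singular values; (iii) use \cref{ass2} to pull the predictor through and produce the $\sigma_\tau^\Phi$ factor; (iv) use a gradient-flow conservation law to rewrite $\sigma_\tau^\Phi$ as $\sqrt{(\sigma_\tau^\Pi)^{2/L_1}+C}$.

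For stage (i), note that every negative-cosine term $D(a,b)=-a^\top b/(\|a\|\,\|b\|)$ in \cref{my_sim} carries a stop-gradient on its second argument, so only the paths through $p_i^{(c)}=\Phi\Pi x_i^{(c)}$ contribute, and $\partial D(a,\operatorname{sg}(b))/\partial a=-\|a\|^{-1}(I-\hat{a}\hat{a}^\top)\hat{b}$ with $\hat{a}=a/\|a\|$. Differentiating each of the two sums of \cref{my_sim}, collecting the per-sample residuals (the factor $1-(\hat{p}_{ri}^{(c)})^2$ arising from the diagonal of $I-\hat{p}\hat{p}^\top$ after the bookkeeping, the $1/\|p_i^{(c)}\|_2$ from the normalization, and the $\tfrac{1}{n_c^2}\sum_j\hat{z}_{rj}^{(c)}+\tfrac{\lambda}{n_c}\hat{p}_{ri}^{(g,c)}$ from the two loss terms), one obtains exactly the matrices $Q^{(c)}(t)$ of the statement; chaining through $P^{(c)}=\Phi\Pi X^{(c)}$ then gives $\partial\ell/\partial\Pi=\Phi^\top\bar{Q}(t)$ with $\bar{Q}(t)$ as in \cref{Q}.

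For stage (ii), since the loss depends on the encoder layers only through the product $\Pi$ (the $z$-path being killed by the stop-gradient), the invariant $\tfrac{d}{dt}\!\left(W_{i+1}^\top W_{i+1}-W_iW_i^\top\right)=0$ holds along the flow, so \cref{ass1} is preserved and $\dot\Pi=-\sum_{j=1}^{L_1}(\Pi\Pi^\top)^{\frac{L_1-j}{L_1}}\,\frac{\partial\ell}{\partial\Pi}\,(\Pi^\top\Pi)^{\frac{j-1}{L_1}}$. Writing $\Pi=U^\Pi\Sigma^\Pi(V^\Pi)^\top$ and using $\dot\sigma_\tau^\Pi=(u_\tau^\Pi)^\top\dot\Pi\,v_\tau^\Pi$ together with $(\Pi\Pi^\top)^a u_\tau^\Pi=(\sigma_\tau^\Pi)^{2a}u_\tau^\Pi$ and $(\Pi^\top\Pi)^b v_\tau^\Pi=(\sigma_\tau^\Pi)^{2b}v_\tau^\Pi$, each of the $L_1$ summands collapses to $(\sigma_\tau^\Pi)^{2-2/L_1}(u_\tau^\Pi)^\top\frac{\partial\ell}{\partial\Pi}v_\tau^\Pi$, giving the prefactor $L_1(\sigma_\tau^\Pi)^{2-2/L_1}$. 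Substituting $\partial\ell/\partial\Pi=\Phi^\top\bar{Q}(t)$ yields $(u_\tau^\Pi)^\top\Phi^\top\bar{Q}v_\tau^\Pi=(\Phi u_\tau^\Pi)^\top\bar{Q}v_\tau^\Pi$; by \cref{ass2} the left singular vectors of $\Pi$ coincide up to sign with the right singular vectors of $\Phi$, so $\Phi u_\tau^\Pi=\pm\sigma_\tau^\Phi u_\tau^\Phi$ and the term becomes $\pm\sigma_\tau^\Phi(u_\tau^\Phi)^\top\bar{Q}v_\tau^\Pi$. Finally, under \cref{ass1} the balanced encoder layers share singular values, so $W_{L_1}W_{L_1}^\top=(\Pi\Pi^\top)^{1/L_1}$ has eigenvalues $(\sigma_\tau^\Pi)^{2/L_1}$, and the conserved quantity $W_{L_1+1}^\top W_{L_1+1}-W_{L_1}W_{L_1}^\top$ gives $(\sigma_\tau^\Phi)^2=(\sigma_\tau^\Pi)^{2/L_1}+C$; combining the three factors produces \cref{eq_theory}.

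I expect stage (i) to be the main obstacle: carefully differentiating the cluster-wise double sum of normalized cosine terms with the stop-gradient in place, and tracking the $1/\|p_i^{(c)}\|_2$ normalizations and which contributions survive, so that the residuals assemble \emph{exactly} into the stated $q_{ri}^{(c)}(t)$ and $\bar{Q}(t)$ rather than merely up to lower-order terms. A secondary delicate point is verifying that the sign/alignment bookkeeping from \cref{ass2} and the choice of the conserved constant $C$ are mutually consistent for general $L_1,L_2\ge 1$, and not only in the single-layer-predictor case where $\Phi=W_{L_1+1}$.
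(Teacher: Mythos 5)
Your stages (i)--(iii) follow the paper's proof essentially verbatim: the paper computes $\partial\ell/\partial w^{\Pi}_{so}$ entrywise by the chain rule (the stop-gradient killing the $z$- and global-prediction paths), assembles the residuals into $Q^{(c)}$ and obtains $\partial\ell/\partial\Pi=-\Phi^\top\bar{Q}$, then invokes the Arora-type closed form (\cref{lem1}) under \cref{ass1} and the singular-value evolution identity (\cref{lem2}), and finally uses \cref{ass2} exactly as you do to collapse $(u_\tau^\Pi)^\top\Phi^\top$ to $\sigma_\tau^\Phi(u_\tau^\Phi)^\top$. One bookkeeping slip: the gradient is $-\Phi^\top\bar{Q}$, not $+\Phi^\top\bar{Q}$; the minus cancels against the minus in $\dot\Pi=-\sum_j(\cdot)\frac{\partial\ell}{\partial\Pi}(\cdot)$ to give the positive sign in \cref{eq_theory}, so as written your stage (ii) would land on the wrong sign.

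The genuine divergence is stage (iv), and there your route has a gap. You propose to obtain $(\sigma_\tau^\Phi)^2=(\sigma_\tau^\Pi)^{2/L_1}+C$ from the conserved quantity $W_{L_1+1}^\top W_{L_1+1}-W_{L_1}W_{L_1}^\top$ across the encoder--predictor interface. But \cref{ass1} only imposes balancedness for $i\in[L_1-1]$, i.e.\ \emph{within} the encoder; even granting that the difference at the interface is conserved along the flow, it is a conserved \emph{matrix}, and turning it into the scalar relation between the $\tau$-th singular values requires an eigenvector-alignment argument you have not supplied. Moreover, the quantity it controls is the spectrum of the first predictor layer $W_{L_1+1}$, not of the product $\Phi$, so your argument only closes when $L_2=1$ --- a limitation you correctly flag yourself. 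The paper closes this step differently and without those extra hypotheses: it computes $\partial\ell/\partial\Phi=-\bar{Q}\Pi^\top$ by the same entrywise chain rule, applies \cref{lem2} directly to $\Phi$ to get $\dot\sigma_\tau^\Phi=\sigma_\tau^\Pi\,(u_\tau^\Phi)^\top\bar{Q}\,v_\tau^\Pi$ (again via \cref{ass2}), observes that this shares the factor $(u_\tau^\Phi)^\top\bar{Q}\,v_\tau^\Pi$ with the $\dot\sigma_\tau^\Pi$ equation, and eliminates it to obtain the ODE $2\sigma_\tau^\Phi\dot\sigma_\tau^\Phi=\tfrac{2}{L_1}(\sigma_\tau^\Pi)^{\frac{2}{L_1}-1}\dot\sigma_\tau^\Pi$, which integrates to $(\sigma_\tau^\Phi)^2=(\sigma_\tau^\Pi)^{2/L_1}+C$. (The paper's treatment of the predictor here tacitly uses $\dot\Phi=-\partial\ell/\partial\Phi$ for the product, i.e.\ it does not apply the Arora closed form on the predictor side, but it at no point needs interface balancedness.) You should replace your conservation-law argument with this coupled-ODE integration.
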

The detailed proof process is provided in \cref{prove}.

%
%

\begin{figure*}[!t]
\centering
\subfigure{
\includegraphics[height = 4cm, width = 4cm]{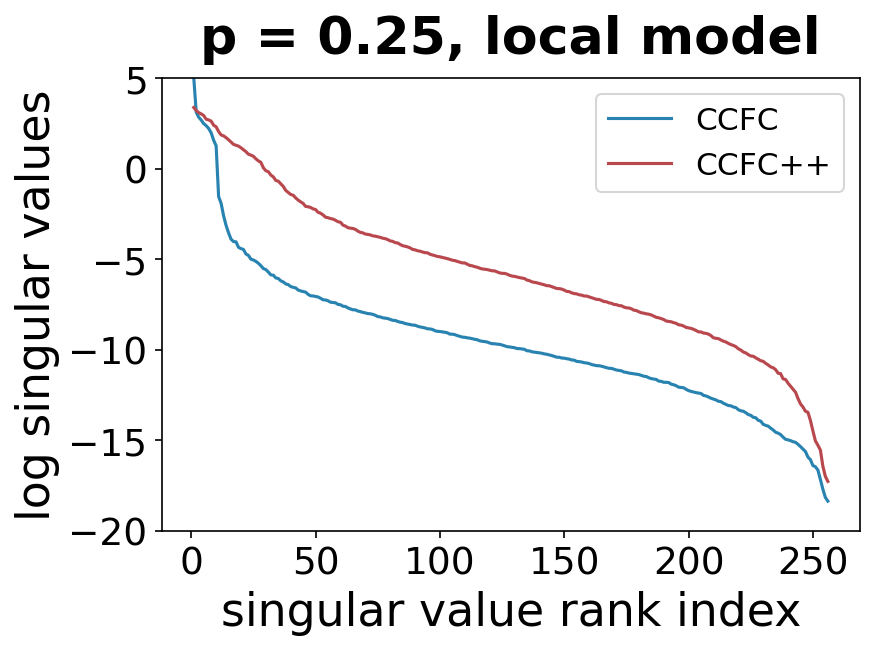}}
\,
\subfigure{
\includegraphics[height = 4cm, width = 4cm]{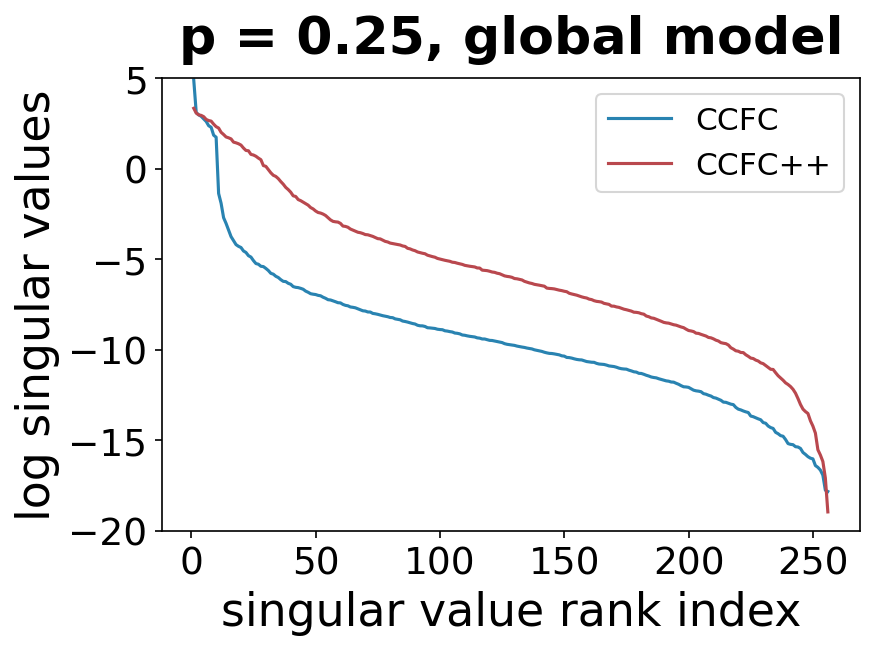}}
\,
\subfigure{
\includegraphics[height = 4cm, width = 4cm]{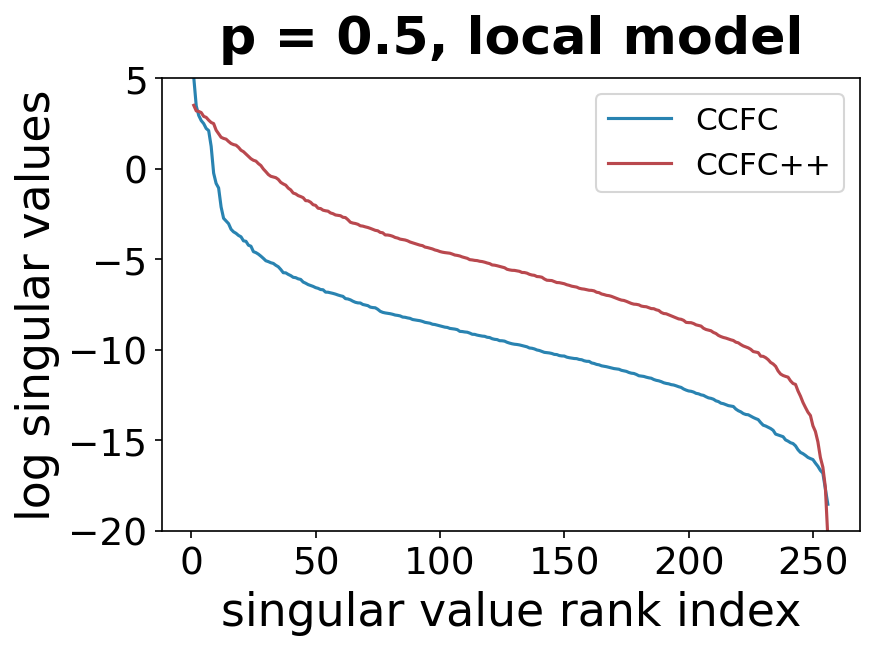}}
\,
\subfigure{
\includegraphics[height = 4cm, width = 4cm]{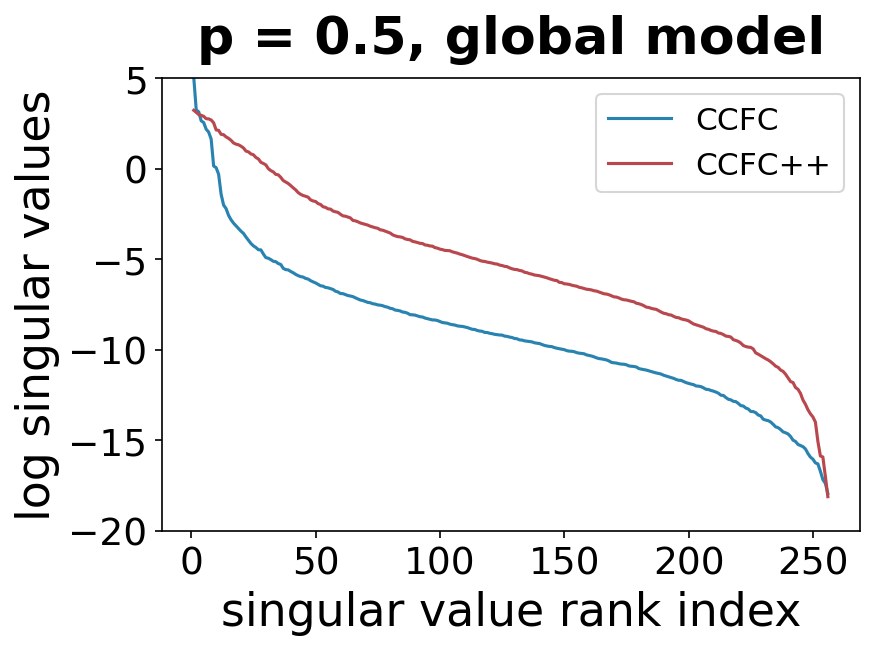}}

\subfigure{
\includegraphics[height = 4cm, width = 4.1cm]{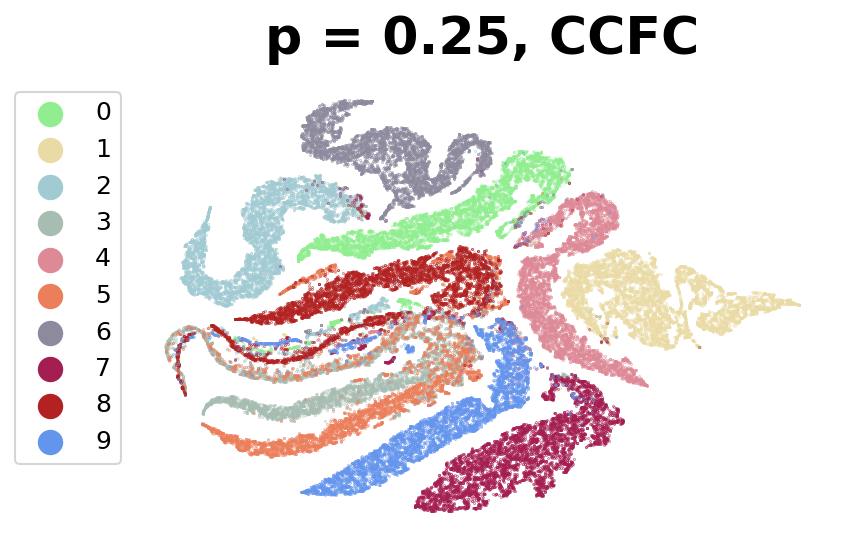}}
\,
\subfigure{
\includegraphics[height = 4cm, width = 4cm]{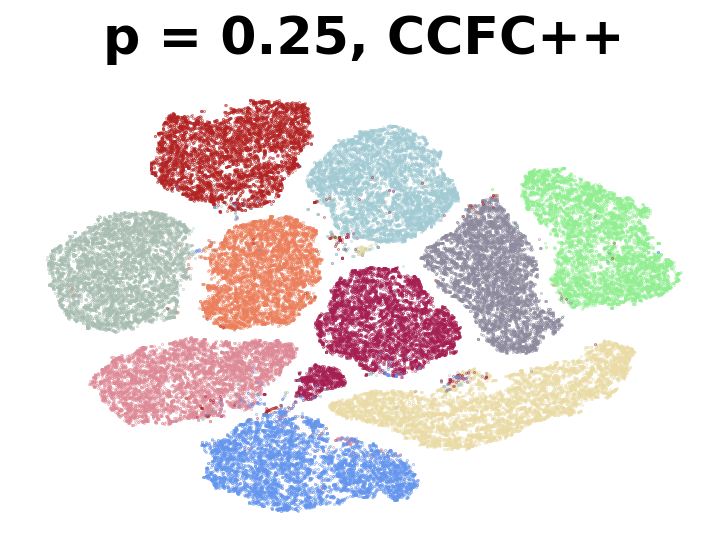}}
\,
\subfigure{
\includegraphics[height = 4cm, width = 4cm]{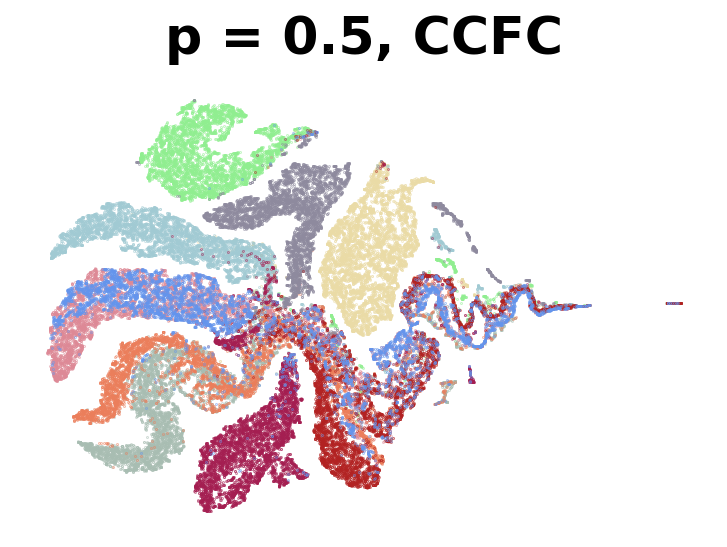}}
\,
\subfigure{
\includegraphics[height = 4cm, width = 4cm]{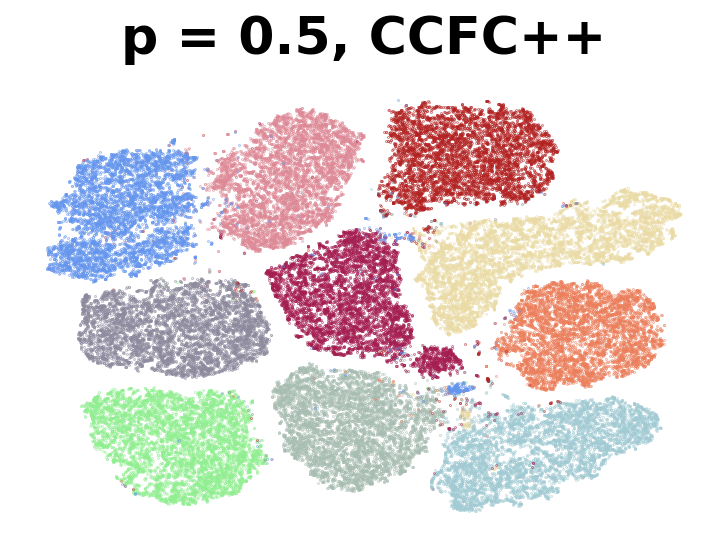}}

\caption{\textbf{The efficacy of the decorrelation regularizer under different simulated federated scenarios on MNIST}.
\textbf{The first row} plots the singular values of the covariance matrix of the learned representations. \textbf{The second row} showcases the learned representations of the global model of CCFC and CCFC++. Each color corresponds to a true cluster. A larger $p$ implies stronger data heterogeneity.} 
\label{case4}
\end{figure*}

Drawing upon \cref{theory}, we now explain why stronger data heterogeneity leads to $\Pi(t)$ being of lower rank. Note that increased data heterogeneity implies a more pronounced imbalance among the true clusters within the client  (recall \cref{case1}), which concomitantly leads to the predicted clusters exhibiting more marked similarities, both intra-cluster and inter-cluster, and a lower-rank $\bar{Q}(t)$ (defined in \cref{Q}). Furthermore, due to the orthogonality of $u_{\tau}^{\Phi}(t)$ and $v_\tau^{\Pi}(t)$ across different $\tau$'s, the term $(u_{\tau}^{\Phi}(t))^\top
\bar{Q}(t)
v_\tau^{\Pi}(t)$ in \cref{eq_theory} tends to be insignificant (small in magnitude) for more values of $\tau$. Then, $\dot{\sigma}_{\tau}^{\Pi}(t)$, the evolving rate of  ${\sigma}_{\tau}^{\Pi}(t)$  will be small for most of the $\tau$’s. As a result, only a few singular values of ${\Pi}(t)$ will exhibit a marked increase after training, resulting in a low-rank ${\Pi}(t)$.

Moreover, the covariance matrix of the representations can be rewritten as:
\begin{align}
\Sigma(t) &= \frac1{kn_c}\sum_{c=1}^k\sum_{i=1}^{n_c}(z_i^{(c)}(t)-\bar{z}(t))
(z_i^{(c)}(t)-\bar{z}(t))^\top \nonumber\\
&= \Pi[\frac1{kn_c}\sum_{c=1}^k\sum_{i=1}^{n_c}(x_i^{(c)}(t)-\bar{x}(t))
(x_i^{(c)}(t)-\bar{x}(t))^\top]\Pi^\top
\end{align}
where $\bar{z}(t) = \frac1{kn_c}\sum_{c=1}^k\sum_{i=1}^{n_c}z_i^{(c)}(t)$, and $\bar{x}(t) = \frac1{kn_c}\sum_{c=1}^k\sum_{i=1}^{n_c}x_i^{(c)}(t)$. Obviously, a low-rank ${\Pi}(t)$ can lead to a low-rank $\Sigma(t)$, which means the dimensional collapse for the representations.

\subsection{CCFC++}
Indeed, low inter-correlation across multiple dimensions of the learned representations is crucial for many learning tasks to attain superior performance, such as clustering \cite{von2007tutorial, tao2021clustering}, self-supervised learning \cite{zbontar2021barlow, hua2021feature}, class incremental learning \cite{shi2022mimicking} and  federated classification\cite{shi2023understanding}. In light of these, a logical approach to diminish the detrimental effects of data heterogeneity involves addressing the dimensional collapse of the learned representations. To this end, we improve CCFC through the incorporation of a tailored decorrelation regularizer, and call this improved version of CCFC as \textbf{CCFC++}.


Following \cite{shi2023understanding}, we also incorporate an extra regularizer $\frac1{(d')^2}\|\Sigma\|_F^2$ into the loss function $\ell$ (defined in \cref{my_sim}) to avert the collapse of the tail singular values of the covariance matrix $\Sigma$ to zero, mitigating dimensional collapse. Finally, the revised loss function $\ell_r$ is defined as:
\begin{equation}\label{loss}
\ell_{new} = \ell + \frac\eta{(d')^2}\|\Sigma\|_F^2,
\end{equation}
where $\eta$ is the tradeoff hyperparameter.

\begin{table*}[!t]
\centering
\caption{\textbf{NMI of clustering methods in different scenarios.} For each comparison, the best result is highlighted in boldface.}
\renewcommand{\arraystretch}{1.5} 
\tabcolsep 2.8mm 
\begin{tabular}{ccccccccccc}
\hline\hline

\multirow{2}{*}{Dataset} &\multirow{2}{*}{$p$} &\multicolumn{2}{c}{Centralized setting} &\multicolumn{6}{c}{Federated setting}\\ \cmidrule(r){3-4} \cmidrule(r){5-11}
\quad &\quad &\textcolor{mygray}{KM} &\textcolor{mygray}{FCM} &k-FED &FFCM &SDA-FC-KM &SDA-FC-FCM &PPFC-GAN &CCFC &CCFC++\\

\hline
\multirow{5}{*}{MNIST} &0.0 &\multirow{5}{*}{\textcolor{mygray}{0.5304}} &\multirow{5}{*}{\textcolor{mygray}{0.5187}} &0.5081 &0.5157 &0.5133 &0.5141 &0.6582 &0.9236 &\textbf{0.9483}\\
\quad &0.25 &\quad &\quad &0.4879 &0.5264 &0.5033 &0.5063 &0.6392 &0.8152 &\textbf{0.9442} \\
\quad &0.5 &\quad &\quad &0.4515 &0.4693 &0.5118 &0.5055 &0.6721 &0.6718 &\textbf{0.9345} \\
\quad &0.75 &\quad &\quad &0.4552 &0.4855 &0.5196 &0.5143 &\textbf{0.7433} &0.3611 &0.6987 \\
\quad &1.0 &\quad &\quad &0.4142 &0.5372 &0.5273 &0.5140 &\textbf{0.8353} &0.0766 &0.1235 \\

\hline
\multirow{5}{*}{Fashion-MNIST} &0.0 &\multirow{5}{*}{\textcolor{mygray}{0.6070}} &\multirow{5}{*}{\textcolor{mygray}{0.6026}} &0.5932 &0.5786 &0.5947 &0.6027 &0.6091 & 0.6237 & \textbf{0.6321} \\
\quad &0.25 &\quad &\quad &0.5730 &0.5995 &0.6052 &0.5664 &0.5975 & 0.5709 & \textbf{0.6420}\\
\quad &0.5 &\quad &\quad &0.6143 &0.6173 &0.6063 &0.6022 &0.5784 & 0.6023 & \textbf{0.6236}\\
\quad &0.75 &\quad &\quad &0.5237 &\textbf{0.6139} &0.6077 &0.5791 &0.6103 &0.4856 & 0.4966\\
\quad &1.0 &\quad &\quad &0.5452 &0.5855 &0.6065 &0.6026 &\textbf{0.6467} &0.1211 & 0.3187\\

\hline
\multirow{5}{*}{CIFAR-10} &0.0 &\multirow{5}{*}{\textcolor{mygray}{0.0871}} &\multirow{5}{*}{\textcolor{mygray}{0.0823}} &0.0820 &0.0812 &0.0823 &0.0819 &0.1165 &0.2449 &\textbf{0.3447}\\
\quad &0.25 &\quad &\quad &0.0866 &0.0832 &0.0835 &0.0818 &0.1185 &0.2094 &\textbf{0.3363}\\
\quad &0.5 &\quad &\quad &0.0885 &0.0870 &0.0838 &0.0810 &0.1237 &0.2085 &\textbf{0.2461}\\
\quad &0.75 &\quad &\quad &0.0818 &0.0842 &0.0864 &0.0808 &0.1157 &0.1189 &\textbf{0.2033}\\
\quad &1.0 &\quad &\quad &0.0881 &0.0832 &0.0856 &0.0858 &\textbf{0.1318} &0.0639 &0.1125\\

\hline
\multirow{5}{*}{STL-10} &0.0 &\multirow{5}{*}{\textcolor{mygray}{0.1532}} &\multirow{5}{*}{\textcolor{mygray}{0.1469}} &0.1468 &0.1436 &0.1470 &0.1406 &0.1318 &0.2952 &\textbf{0.3169}\\
\quad &0.25 &\quad &\quad &0.1472 &0.1493 &0.1511 &0.1435 &0.1501 &0.1727 &\textbf{0.2743}\\
\quad &0.5  &\quad &\quad &0.1495 &0.1334 &0.1498 &0.1424 &0.1432 &0.2125 &\textbf{0.2702}\\
\quad &0.75 &\quad &\quad &0.1455 &0.1304 &0.1441 &0.1425 &0.1590 &0.1610 &\textbf{0.2480}\\
\quad &1.0  &\quad &\quad &0.1403 &0.1565 &0.1477 &0.1447 &\textbf{0.1629} &0.0711 & 0.0066\\

\hline
count &- &- &- &0 &1 &0 &0  &5 &0 & 14\\
\hline\hline
\end{tabular}\label{NMI}
\end{table*}

\begin{table*}[!t]
\centering
\caption{\textbf{Kappa of clustering methods in different scenarios.} For each comparison, the best result is highlighted in boldface.}
\renewcommand{\arraystretch}{1.5} 
\tabcolsep 2.8mm 
\begin{tabular}{cccccccccccc}
\hline\hline

\multirow{2}{*}{Dataset} &\multirow{2}{*}{$p$} &\multicolumn{2}{c}{Centralized setting} &\multicolumn{6}{c}{Federated setting}\\ \cmidrule(r){3-4} \cmidrule(r){5-11}
\quad &\quad &\textcolor{mygray}{KM} &\textcolor{mygray}{FCM} &k-FED &FFCM &SDA-FC-KM &SDA-FC-FCM &PPFC-GAN &CCFC &CCFC++\\

\hline
\multirow{5}{*}{MNIST} &0.0 &\multirow{5}{*}{\textcolor{mygray}{0.4786}} &\multirow{5}{*}{\textcolor{mygray}{0.5024}} &0.5026 &0.5060 &0.4977 &0.5109 &0.6134 &0.9619 &\textbf{0.9723}\\
\quad &0.25 &\quad &\quad &0.4000 &0.5105 &0.4781 &0.5027 &0.5773 &0.8307 &\textbf{0.9713} \\
\quad &0.5 &\quad &\quad &0.3636 &0.3972 &0.4884 &0.4967 &0.6007 &0.6534 &\textbf{0.9653} \\
\quad &0.75 &\quad &\quad &0.3558 &0.4543 &0.4926 &0.5021 &\textbf{0.6892} &0.3307 &0.6198 \\
\quad &1.0 &\quad &\quad &0.3386 &0.5103 &0.5000 &0.5060 &\textbf{0.7884} &0.0911 &0.1445 \\

\hline
\multirow{5}{*}{Fashion-MNIST} &0.0 &\multirow{5}{*}{\textcolor{mygray}{0.4778}} &\multirow{5}{*}{\textcolor{mygray}{0.5212}} &0.4657 &0.4974 &0.4918 &0.4918 &0.4857 & 0.6411 & \textbf{0.6460}\\
\quad &0.25 &\quad &\quad &0.5222 &0.5180 &0.4918 &0.4918 &0.4721 &0.5261 & \textbf{0.6547}\\
\quad &0.5 &\quad &\quad &0.4951 &0.4974 &0.4918 &0.4918 &0.4552 &0.5929 & \textbf{0.6277}\\
\quad &0.75 &\quad &\quad &0.4240 &\textbf{0.4995} &0.4918 &0.4918 &0.4774 &0.3945 & 0.4754\\
\quad &1.0 &\quad &\quad &0.3923 &0.4672 &0.4918 &0.4918 &\textbf{0.5745} &0.1434 & 0.2777\\

\hline
\multirow{5}{*}{CIFAR-10} &0.0 &\multirow{5}{*}{\textcolor{mygray}{0.1347}} &\multirow{5}{*}{\textcolor{mygray}{0.1437}} &0.1305 &0.1439 &0.1275 &0.1283 &0.1426 &0.2854 &\textbf{0.3760}\\
\quad &0.25 &\quad &\quad &0.1366 &0.1491 &0.1275 &0.1376 &0.1400 &0.2281 &\textbf{0.3758}\\
\quad &0.5 &\quad &\quad &0.1252 &0.1316 &0.1307 &0.1411 &0.1443 &0.2214 &\textbf{0.3048}\\
\quad &0.75 &\quad &\quad &0.1303 &0.1197 &0.1360 &0.1464 &0.1358 &0.1214 &\textbf{0.2542}\\
\quad &1.0 &\quad &\quad &0.1147 &0.1237 &0.1341 &0.1494 &\textbf{0.1499} &0.1047 &0.1415\\

\hline
\multirow{5}{*}{STL-10} &0.0 &\multirow{5}{*}{\textcolor{mygray}{0.1550}} &\multirow{5}{*}{\textcolor{mygray}{0.1602}} &0.1390 &0.1514 &0.1533 &0.1505 &0.1557 &0.1687 &\textbf{0.2571}\\
\quad &0.25 &\quad &\quad &0.1361 &0.1479 &0.1448 &0.1527 &0.1611 &0.1422 &\textbf{0.2551}\\
\quad &0.5 &\quad &\quad &0.1505 &0.1112 &0.1377 &0.1620 &0.1415  &0.1407 &\textbf{0.2427}\\
\quad &0.75 &\quad &\quad &0.1256 &0.1001 &0.1513 &0.1603 &0.1813 &0.1133 &\textbf{0.2332}\\
\quad &1.0 &\quad &\quad &0.1328 &0.1351 &0.1527 &0.1553 &\textbf{0.1868}  &0.0519 &0.0081\\

\hline
count &- &- &- &0 &1 &0  &0  &5 &0 &14\\
\hline\hline
\end{tabular}\label{Kappa}
\end{table*}

\section{Experiments}
\subsection{Experimental setup}
We evaluate CCFC++ on federated datasets simulated on MNIST (70,000 images with 10 true clusters), Fashion-MNIST (70,000 images with 10 true clusters), CIFAR-10 (60,000 images with 10 true clusters), and STL-10 (13,000 images with 10 true clusters). The federated data simulation method has already been introduced in \cref{cg}. The evaluation metrics are normalized mutual information (NMI) \cite{strehl2002cluster} and Kappa \cite{liu2019evaluation}.

To focus on the efficacy of the decorrelation regularizer and to avoid excessive hyperparameter tuning, we adhere to the default configurations from CCFC \cite{yan2024ccfc} for aspects such as the network architecture of the cluster-contrastive model, the trade-off hyperparameter $\lambda$, the latent representation dimension, the learning rate and the optimizer, while solely tuning the tradeoff hyperparameter $\eta$. The tradeoff hyperparameter $\eta$ is set to 0.01 for MNIST, and 0.1 for Fashion-MNIST, CIFAR-10 and STL-10. The code will be made available.

\subsection{Efficacy of the decorrelation regularizer}
Since the goal of combining federated clustering with contrastive learning is to improve clustering performance by learning more clustering-friendly representations, we validate the efficacy of the decorrelation regularizer from two aspects: mitigating dimensional collapse and learning clustering-friendly representations.

To this end, we first perform CCFC and CCFC++ under different simulated federated scenarios on MNIST, respectively. Then, we plot the singular values of the covariance matrix of the learned representations, and  visualize the learned representations using t-SNE \cite{van2008visualizing}.
As shown in \cref{case4}, CCFC++ with the decorrelation regularizer effectively mitigates the dimensional collapse in CCFC under different federated scenarios, leading to more clustering-friendly representations.

\subsection{Effectiveness of CCFC++}
For comprehensive comparison, we additionally select five cutting-edge baselines, including k-FED \cite{dennis2021heterogeneity}, FFCM \cite{stallmann2022towards}, SDA-FC-KM \cite{sdafc}, SDA-FC-FCM \cite{sdafc} and PPFC-GAN \cite{ppfcgan}. To avoid over-tuning, the hyperparameter settings for each method are identical across different scenarios within the same dataset.

As shown in Tables \cref{NMI} and \cref{Kappa}, one can see that: 1) Both NMI and Kappa corroborate the dominant superiority of CCFC++ in most cases. 2) Benefiting from the decorrelation regularizer, CCFC++ effectively mitigates the detrimental effects of data heterogeneity. The most striking case occurs on MNIST under the data heterogeneity condition of $p = 0.75$, witnessing improvements of up to 0.32 for NMI and 0.27 for Kappa. 3) Interestingly, the decorrelation regularizer, while designed for heterogeneous data, also significantly enhances performance in homogeneous scenarios ($p = 0$), suggesting potential applications in centralized clustering methods.

\subsection{Hyperparameter sensitivity analysis}
Given that this work centers on counteracting the adverse effects of heterogeneous data on clustering performance through feature decorrelation, we mainly analyze the sensitivity of CCFC++ to the tradeoff hyperparameter $\eta$, with the tradeoff hyperparameter $\lambda$ adhering to the configuration specified in \cite{yan2024ccfc}.

\cref{case5} illustrates that CCFC++ remains robust across a wide range of $\eta$ for each federated scenario. Notably, the optimal $\eta$ varies under distinct federated scenarios, suggesting the reported improvements in Tables \cref{NMI} and \cref{Kappa} are understated.

\begin{figure}[!t]
\centering
\subfigure[NMI]{
\includegraphics[height = 4cm, width = 3.9cm]{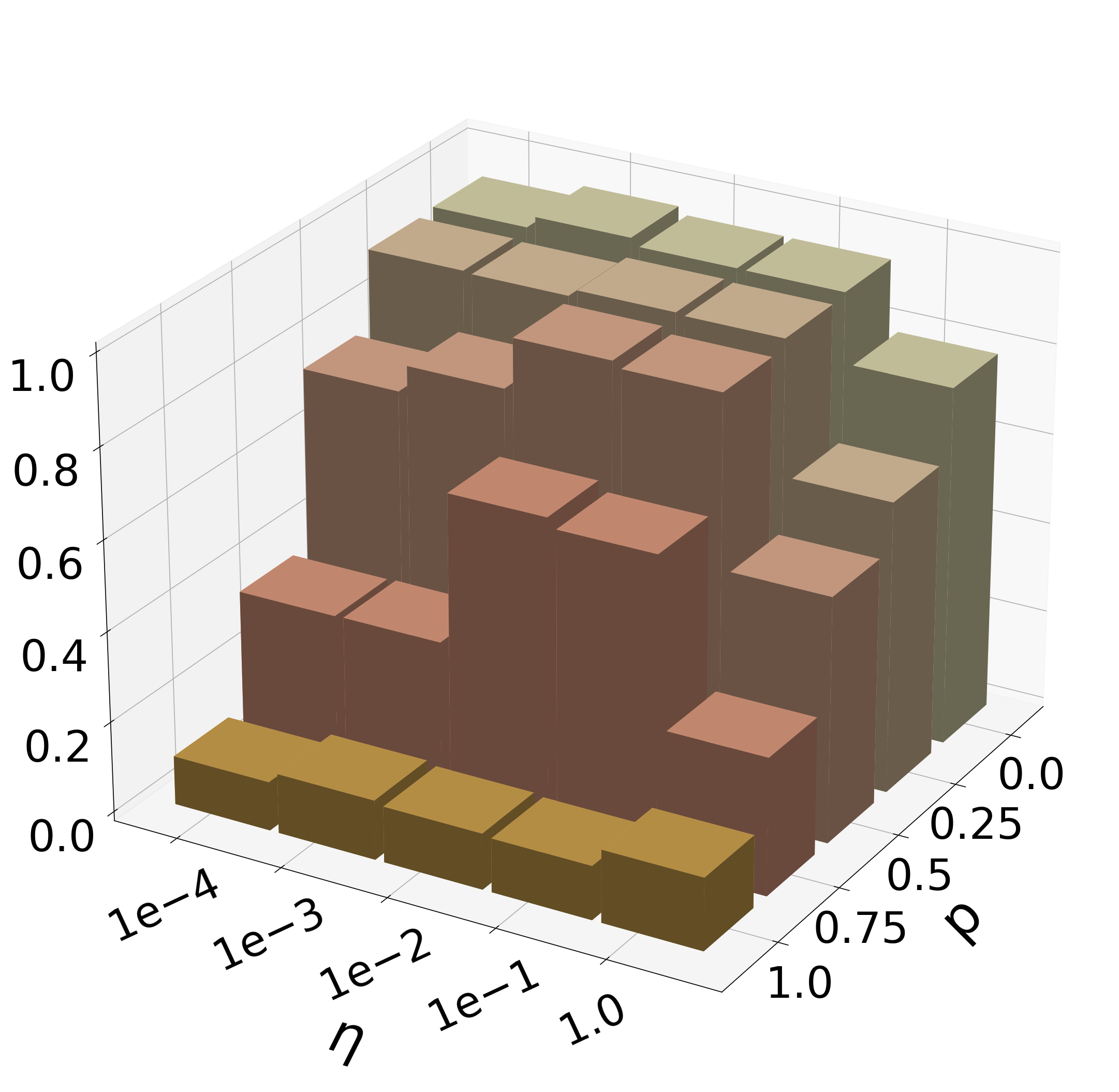}}
\,
\subfigure[Kappa]{
\includegraphics[height = 4cm, width = 3.9cm]{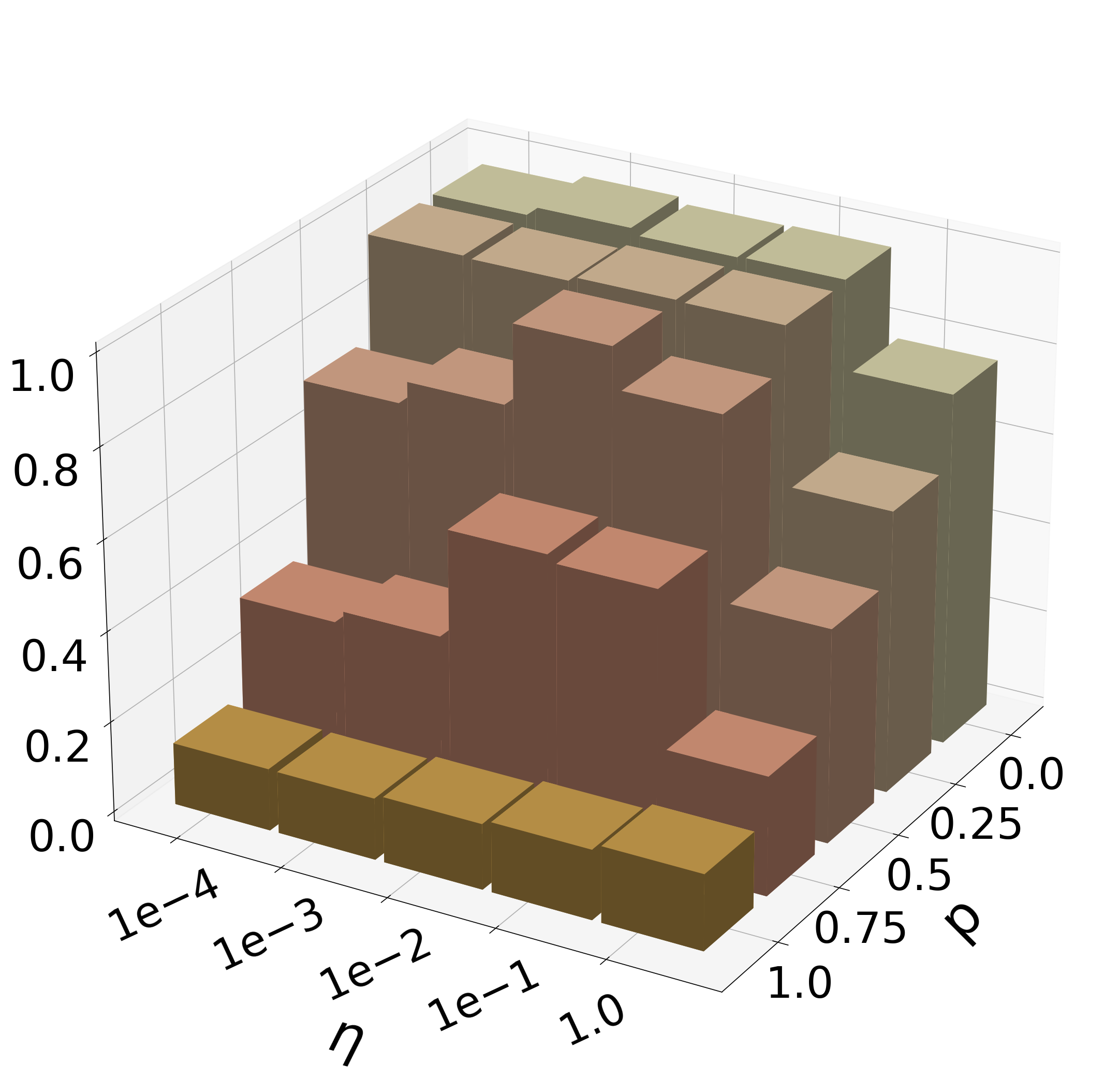}}

\caption{\textbf{Sensitivity of CCFC++ to the hyperparameter $\eta$ under different simulated federated scenarios on MNIST}.}
\label{case5}
\end{figure}

\begin{figure*}[!t]
\centering
\includegraphics[height = 8.5cm, width = 18cm]{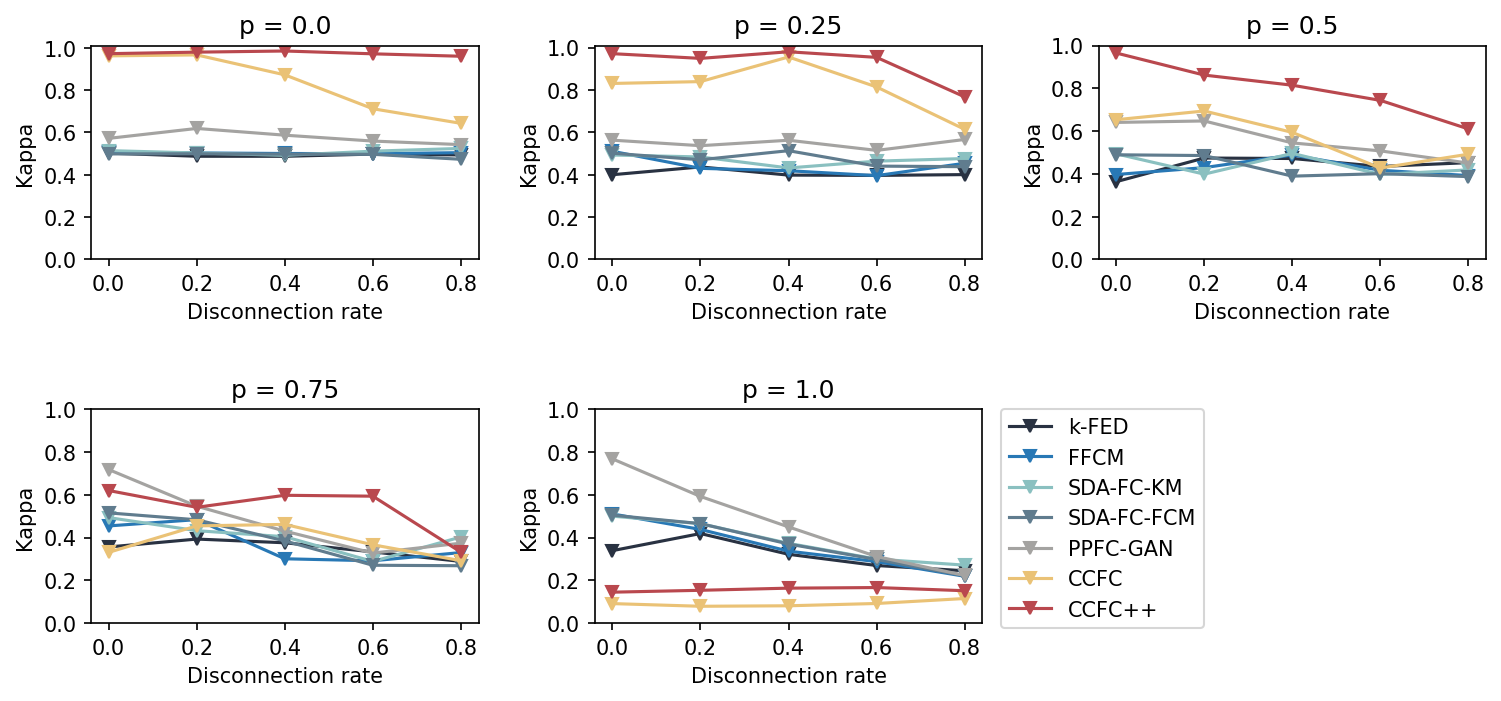}

\caption{\textbf{Sensitivity of CCFC++ to the device failures under different simulated federated scenarios on MNIST}.}
\label{case6}
\end{figure*}

\subsection{Device failures}
In practice, beyond data heterogeneity, systems heterogeneity is also a core concern in federated learning \cite{li2020federated, yan2024ccfc}. This scenario is characterized by disparate computational, storage and communication capacities among clients, and some clients are unable to engage in model training or may experience disconnection from the server mid-training, leading to poor and unrobust model performance. Hence, from a pragmatic perspective, it becomes crucial to explore how CCFC++ responds to device failures.

Following \cite{yan2024ccfc}, we use the \textbf{disconnection rate} to measure the ratio of disconnected clients relative to all clients, with only the connected clients partaking in the training throughout the entire process. As shown in \cref{case6}, one can see that: 1) CCFC++ exhibits a dominant superiority in most cases. 2) Benefiting from the decorrelation regularizer, CCFC++ improves both the clustering performance and robustness of CCFC in handling device failures. 3) Occasionally, device failures can even improve the clustering performance of some methods, suggesting that strategic subsampling could potentially further boost the clustering performance.

In summary: 1) The decorrelation regularizer effectively mitigates the dimensional collapse in CCFC under different federated scenarios, leading to more clustering-friendly representations. 2) This regularizer demonstrates beneficial in managing both data and systems heterogeneity. 3) CCFC++ exhibits a dominant superiority in most cases of the simulated scenarios. 4) CCFC++ demonstrates robustness to varying values of the tradeoff hyperparameter $\eta$ for each fixed federated scenario.

\section{Conclusion}
In this work, we first analyse how heterogeneous data affects CCFC. Both empirical and theoretical investigations reveal that increased data heterogeneity exacerbates dimensional collapse in CCFC. In light of these, we improve CCFC through the incorporation of a decorrelation regularizer.
Comprehensive experiments demonstrate the effectiveness of the regularizer and the improved CCFC.

We hope our work will serve as a catalyst for future research in FC or other unsupervised federated learning domains, inspiring fellow researchers and practitioners to tackle similar challenges.

\bibliographystyle{IEEEtran}
\bibliography{references}

\newpage

\onecolumn

{\appendix[Proof of the Theorem 3.1]


\begin{table}[!t]
\centering
\caption{Notations}
\renewcommand{\arraystretch}{1.8} 
\tabcolsep 11mm 
\begin{tabular}{cc}
\hline\hline
Notation   &Explanation    \\\hline
$k$        &The number of clusters\\
$n_c$      &The number of training samples in cluster $c$, $c \in [k] = \{1,\, 2,\, \cdots,\, k\}$\\
$d$        &The dimension of training samples\\

$X^{(c)}$  &The collection of samples in cluster $c$, $X^{(c)} = [x_{1}^{(c)},\, x_{2}^{(c)},\, \cdots,\, x_{n_c}^{(c)}] \in \mathbb{R}^{d \times n_c}$\\
$x_{oi}^{(c)}$  &The $o$-th feature of $x_{i}^{(c)}$, $x_{oi}^{(c)} \in \mathbb{R}$, $o \in [d]$, $i \in [n_c]$\\
$x_{o\cdot}^{(c)}$  &The collection of the $o$-th features in cluster $c$ (i.e. the $o$-th row vector of $X^{(c)}$), $x_{o\cdot}^{(c)}\in \mathbb{R}^{1 \times n_c}$\\

$\Pi(t)$     &The weight matrix of the local encoder at the $t$-th optimization step\\
$\Pi^{(g)}$  &The weight matrix of the global encoder, which remains fixed during the local training process\\
$\Phi(t)$    &The weight matrix of the local predictor at the $t$-th optimization step\\
$\Phi^{(g)}$ &The weight matrix of the global predictor, which remains fixed during the local training process\\

$d'$        &The dimension of the latent representations\\
$Z^{(c)}(t)$ &The local latent representation of $X^{(c)}$, $Z^{(c)}(t) = \Pi(t) X^{(c)} = [z_{1}^{(c)}(t),\, z_{2}^{(c)}(t),\, \cdots,\, z_{n_c}^{(c)}(t)] \in \mathbb{R}^{d' \times n_c}$\\
$P^{(c)}(t)$ &The local predictions of $X^{(c)}$, $P^{(c)}(t) = \Phi(t) Z^{(c)}(t) = [p_{1}^{(c)}(t),\, p_{2}^{(c)}(t),\, \cdots,\, p_{n_c}^{(c)}(t)] \in \mathbb{R}^{d' \times n_c}$\\
$P^{(g,\, c)}$ &The global predictions of $X^{(c)}$, $P^{(g,\, c)} = \Phi^{(g)}\Pi^{(g)} X^{(c)} = [p_{1}^{(g,\, c)},\, p_{2}^{(g,\, c)},\, \cdots,\, p_{n_c}^{(g,\, c)}] \in \mathbb{R}^{d' \times n_c}$\\

$\sigma_{\tau}^{\Pi}(t)$  &The $\tau$-th largest singular value of $\Pi(t)$\\
$u_{\tau}^{\Pi}(t)$       &The $\tau$-th left singular vector of $\Pi(t)$\\
$v_{\tau}^{\Pi}(t)$       &The $\tau$-th right singular vector of $\Pi(t)$\\

$\sigma_{\tau}^{\Phi}(t)$  &The $\tau$-th largest singular value of $\Phi(t)$\\
$u_{\tau}^{\Phi}(t)$       &The $\tau$-th left singular vector of $\Phi(t)$\\
$v_{\tau}^{\Phi}(t)$       &The $\tau$-th right singular vector of $\Phi(t)$\\
\hline\hline
\end{tabular}
\label{notations}
\end{table}

\label{prove}
Before proving the \cref{theory}, we first summarize some notations used in both the main text and this appendix, and introduce two lemmas from Shi et al. \cite{shi2023understanding}. Refer to \cref{notations} for the notions, with the lemmas delineated below:

\begin{lemma}
Given $L$ successive linear layers in a neural network, each corresponds to a weight matrix $W_i$ $(i \in [L])$. At the optimization time step $t$, the product of these matrices is denoted as $\Pi(t) = W_{L}(t)W_{L-1}(t)\cdots W_{1}(t)$. Assuming at the time step $0$, $W_{i}(0)(W_{i}(0))^{\top}=(W_{i+1}(0))^{\top}W_{i+1}(0)$ holds for any layer $i \in [L-1]$. Then, the gradient descent dynamics of $\Pi(t)$ satisfies:
\begin{equation}
\dot{\Pi}(t)=-\sum_{i=1}^{L}\left[\Pi(t)\Pi(t)^\top\right]^{\frac{L-i}L}
\frac{\partial\ell(\Pi(t))}{\partial\Pi}
\left[\Pi(t)^\top\Pi(t)\right]^{\frac{i-1}L},
\end{equation}
where $\left[\cdot \right]^{\frac{L-i}L}$ and $\left[\cdot \right]^{\frac{i-1}L}$ are fractional power operators.
\label{lem1}
\end{lemma}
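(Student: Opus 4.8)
The plan is to read the stated dynamics as a gradient flow on the individual layer weights and to reduce the end-to-end evolution of $\Pi(t)$ to two ingredients: first, a conservation law showing that the balancedness hypothesis $W_i(0)(W_i(0))^{\top}=(W_{i+1}(0))^{\top}W_{i+1}(0)$ persists for all $t$; and second, a telescoping singular value decomposition that converts this persistent balancedness into the fractional-power factors appearing in the claim. Throughout I write $G = \partial\ell(\Pi)/\partial\Pi$ for the gradient with respect to the end-to-end matrix, and I abbreviate the partial products $A_i = W_{L}\cdots W_{i+1}$ (with $A_L = I$) and $B_i = W_{i-1}\cdots W_{1}$ (with $B_1 = I$), so that $\Pi = A_i W_i B_i$ for every $i$ and time dependence is suppressed for brevity.

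First I would record the per-layer gradient flow. Since $\ell$ depends on the weights only through their product $\Pi$, the chain rule gives $\partial\ell/\partial W_i = A_i^{\top} G B_i^{\top}$, hence $\dot W_i = -A_i^{\top} G B_i^{\top}$. Differentiating $\Pi = W_L\cdots W_1$ by the product rule and substituting then yields
\begin{equation}
\dot\Pi \;=\; \sum_{i=1}^{L} A_i\,\dot W_i\, B_i \;=\; -\sum_{i=1}^{L} A_i A_i^{\top}\, G\, B_i^{\top} B_i,
\end{equation}
so the entire proof reduces to identifying $A_i A_i^{\top}$ with $[\Pi\Pi^{\top}]^{(L-i)/L}$ and $B_i^{\top} B_i$ with $[\Pi^{\top}\Pi]^{(i-1)/L}$.

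Second, the conservation law. I would differentiate $W_{i+1}^{\top} W_{i+1}$ and $W_i W_i^{\top}$ along the flow. Using $B_{i+1} = W_i B_i$ and the identity $A_{i+1} W_{i+1} = A_i$, a short manipulation of $\dot W_{i+1} = -A_{i+1}^{\top} G B_{i+1}^{\top}$ shows that $W_{i+1}^{\top}\dot W_{i+1} = \dot W_i W_i^{\top}$; transposing gives $\dot W_{i+1}^{\top} W_{i+1} = W_i \dot W_i^{\top}$. Adding the two identities shows that $\frac{d}{dt}\big(W_{i+1}^{\top} W_{i+1} - W_i W_i^{\top}\big) = 0$, so this difference is constant in $t$; by the balancedness hypothesis it vanishes at $t=0$, hence $W_i(t)(W_i(t))^{\top} = (W_{i+1}(t))^{\top} W_{i+1}(t)$ for all $t$ and all $i \in [L-1]$. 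Then, writing each layer in (thin) SVD form $W_i = U_i D_i V_i^{\top}$, the persistent balancedness $U_i D_i^{2} U_i^{\top} = V_{i+1} D_{i+1}^{2} V_{i+1}^{\top}$ forces the singular values to coincide across layers, $D_i \equiv D$, and permits the coherent choice $V_{i+1} = U_i$. The chain telescopes to $\Pi = U_L D^{L} U_0^{\top}$, an SVD of $\Pi$, while $A_i = U_L D^{L-i} U_i^{\top}$ and $B_i = U_{i-1} D^{i-1} U_0^{\top}$. Consequently $A_i A_i^{\top} = U_L D^{2(L-i)} U_L^{\top} = [\Pi\Pi^{\top}]^{(L-i)/L}$ and $B_i^{\top} B_i = U_0 D^{2(i-1)} U_0^{\top} = [\Pi^{\top}\Pi]^{(i-1)/L}$, which substituted into the displayed expression for $\dot\Pi$ gives precisely the claimed formula.

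\textbf{Main obstacle.} The hardest part will be making the telescoping rigorous when the layers are rectangular, so that the $W_i$ have differing shapes and $D$ carries varying numbers of nonzero singular values, and in defining the fractional powers $[\cdot]^{(L-i)/L}$ consistently on the possibly degenerate ranges of $\Pi\Pi^{\top}$ and $\Pi^{\top}\Pi$. I would handle this by restricting attention to the common nonzero singular subspaces dictated by balancedness and interpreting the fractional powers as the principal powers of the positive semidefinite matrices $\Pi\Pi^{\top}$ and $\Pi^{\top}\Pi$, as in the framework of Arora et al. A secondary technical point is justifying that the alignment $V_{i+1} = U_i$ can be selected coherently across all layers at once in the presence of repeated or zero singular values, which requires some care in fixing the orthogonal factors.
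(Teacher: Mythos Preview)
The paper does not actually prove this lemma: it is stated in the appendix explicitly as one of ``two lemmas from Shi et al.\ \cite{shi2023understanding}'' and is used as a black box in the derivation of Theorem~3.1. There is therefore no in-paper proof to compare your proposal against.

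That said, your proposal is the standard and correct argument, essentially the one originating in Arora et al.\ (2018, 2019), which is also what Shi et al.\ invoke. The three pieces---the per-layer chain rule giving $\dot W_i=-A_i^\top G B_i^\top$ and hence $\dot\Pi=-\sum_i A_iA_i^\top\,G\,B_i^\top B_i$; the conservation law $\tfrac{d}{dt}\big(W_{i+1}^\top W_{i+1}-W_iW_i^\top\big)=0$ obtained from $W_{i+1}^\top\dot W_{i+1}=\dot W_i W_i^\top$; and the telescoping SVD that turns persistent balancedness into $A_iA_i^\top=[\Pi\Pi^\top]^{(L-i)/L}$ and $B_i^\top B_i=[\Pi^\top\Pi]^{(i-1)/L}$---are exactly the ingredients of the known proof, and your calculations check out. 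The obstacles you flag (rectangular layers, repeated or zero singular values, coherent choice of orthogonal factors) are real but are handled in the literature precisely the way you suggest, by working on the common nonzero singular subspaces and taking principal fractional powers of the PSD matrices $\Pi\Pi^\top$ and $\Pi^\top\Pi$; the paper implicitly adopts this convention without comment.
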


\begin{lemma}
Under gradient descent dynamics with infinitesimally small learning rate, the $\tau$-th largest singular value $\sigma_{\tau}$ of the weight matrix $W$ evolves as:
\begin{equation}
\dot{\sigma}_{\tau}(t)=(u_{\tau}(t))^\top\dot{W}(t)v_{\tau}(t),
\end{equation}
where $u_{\tau}(t)$ and $v_{\tau}(t)$ are the $\tau$-th left and right singular vectors of the weight matrix $W$.
\label{lem2}
\end{lemma}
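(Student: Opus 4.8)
The plan is to prove \cref{lem2} by differentiating the scalar identity that expresses each singular value as a bilinear form in its own left and right singular vectors. Writing the time-dependent SVD as $W(t) = \sum_j \sigma_j(t)\, u_j(t) v_j(t)^\top$ with orthonormal families $\{u_j(t)\}$ and $\{v_j(t)\}$, the orthonormality relations $u_\tau^\top u_j = \delta_{\tau j}$ and $v_j^\top v_\tau = \delta_{j\tau}$ immediately give the key starting identity $\sigma_\tau(t) = u_\tau(t)^\top W(t)\, v_\tau(t)$. I would take this scalar as the object to differentiate in $t$.

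Applying the product rule yields
\begin{equation}
\dot{\sigma}_\tau = \dot{u}_\tau^\top W v_\tau + u_\tau^\top \dot{W} v_\tau + u_\tau^\top W \dot{v}_\tau .
\end{equation}
The middle term is exactly the claimed expression, so it remains to show the two boundary terms vanish. For this I would invoke the defining singular-vector relations $W v_\tau = \sigma_\tau u_\tau$ and $u_\tau^\top W = \sigma_\tau v_\tau^\top$ (the latter from $W^\top u_\tau = \sigma_\tau v_\tau$), which reduce the first and third terms to $\sigma_\tau\, \dot{u}_\tau^\top u_\tau$ and $\sigma_\tau\, v_\tau^\top \dot{v}_\tau$ respectively. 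Finally, differentiating the unit-norm constraints $u_\tau^\top u_\tau = 1$ and $v_\tau^\top v_\tau = 1$ gives $u_\tau^\top \dot{u}_\tau = 0$ and $v_\tau^\top \dot{v}_\tau = 0$, so both boundary terms are zero and the stated identity $\dot{\sigma}_\tau = u_\tau^\top \dot{W} v_\tau$ follows directly.

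The main subtlety, rather than any heavy computation, is ensuring the SVD factors are differentiable in $t$. The derivation tacitly assumes $\sigma_\tau(t)$, $u_\tau(t)$ and $v_\tau(t)$ vary as smooth curves, which holds on any time interval where the $\tau$-th singular value is simple (distinct from its neighbors); at singular-value crossings the corresponding singular vectors need not be uniquely defined or differentiable, and one would instead need one-sided or generalized derivatives. Since this lemma is quoted from \cite{shi2023understanding} and is applied only in the gradient-flow regime with infinitesimal step size, I would state the non-degeneracy (simple-singular-value) condition explicitly, note that it holds generically along the trajectory, and then present the short differentiation above as the complete argument.
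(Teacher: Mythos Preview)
Your argument is correct and is the standard derivation: differentiate the scalar identity $\sigma_\tau = u_\tau^\top W v_\tau$, use $W v_\tau = \sigma_\tau u_\tau$ and $W^\top u_\tau = \sigma_\tau v_\tau$ to rewrite the boundary terms, and kill them via the unit-norm constraints $u_\tau^\top \dot u_\tau = v_\tau^\top \dot v_\tau = 0$. Your caveat about needing simplicity of $\sigma_\tau$ for differentiability of the singular vectors is the right technical qualification.

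Note, however, that the paper does \emph{not} supply its own proof of this lemma at all: it is explicitly imported from \cite{shi2023understanding} and merely stated as a tool for proving \cref{theory}. So there is no paper-side proof to compare against; you have filled in a derivation the authors chose to cite rather than reproduce.
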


Based on these lemmas, we can derive the theorem below.

\begin{theorem}
Under assumptions \cref{ass1} and \cref{ass2}, the gradient descent dynamics of the $\tau$-th largest singular value $\sigma_{\tau}^{\Pi}(t)$ of $\Pi(t)$ can be expressed as:
\begin{align}
\dot{\sigma}_{\tau}^{\Pi}(t)  = &L_1(\sigma_\tau^{\Pi}(t))^{2-\frac{2}{L_1}}
\sqrt{(\sigma_{\tau}^{\Pi}(t))^{\frac2{L_1}} + C} \nonumber\\
& \times
(u_{\tau}^{\Phi}(t))^\top
\bar{Q}(t)
v_\tau^{\Pi}(t),
\end{align}
where $u_{\tau}^{\Phi}(t)$ is the $\tau$-th left singular vector of $\Phi(t)$, $v_{{\tau}}^{\Pi}(t)$ is the $\tau$-th right singular vector of $\Pi(t)$, $C$ is a constant,
\begin{equation}
\bar{Q}(t) = \frac1k\sum_{c = 1}^{k}Q^{(c)}(t) (X^{(c)})^\top,
\end{equation}
the element $q_{ri}^{(c)}(t) \in \mathbb{R}$ located in the $r$-th $(r \in \{1,\, 2,\, \cdots,\, d'\})$ row and $i$-th $(i \in [n_c])$ column of $Q^{(c)}(t) \in \mathbb{R}^{d' \times n_c}$ is expressed as:
\begin{equation}
q_{ri}^{(c)}(t) = \left(\frac{1}{n_c^2}\sum_{j = 1}^{n_c} \hat{z}_{rj}^{(c)}(t) + \frac{\lambda}{n_c} \hat{p}_{ri}^{(g,\, c)}   \right)
\cdot \frac{1 - (\hat{p}_{ri}^{(c)}(t))^2}{\|p_{i}^{(c)}(t)\|_2},
\end{equation}
$\hat{p}_{ri}^{(c)}(t) = \frac{p_{ri}^{(c)}(t)}{\|p_{i}^{(c)}(t)\|_2} \in \mathbb{R}$,
$\hat{z}_{rj}^{(c)}(t) = \frac{z_{rj}^{(c)}(t)}{\|z_{j}^{(c)}(t)\|_2} \in \mathbb{R}$, and $\hat{p}_{ri}^{(g,\, c)} = \frac{p_{ri}^{(g,\, c)}}{\|p_{i}^{(g,\, c)}\|_2} \in \mathbb{R}$.
\end{theorem}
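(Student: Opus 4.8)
The plan is to combine a direct differentiation of the loss \cref{my_sim} with the two imported lemmas. First I would compute $\partial\ell/\partial\Pi$. Because the stop-gradient freezes $z_j^{(c)}$ and $p_i^{(g,c)}$, the matrix $\Pi$ enters $\ell$ only through $P^{(c)}=\Phi\Pi X^{(c)}$, so by the chain rule $\partial\ell/\partial\Pi=\sum_c\Phi^\top\big(\partial\ell/\partial P^{(c)}\big)(X^{(c)})^\top$. For the negative cosine similarity $D(p,b)=-p^\top b/(\|p\|\|b\|)$, differentiating in the first slot gives $\partial D/\partial p=-\|p\|^{-1}(I-\hat p\hat p^\top)\hat b$ with $\hat p=p/\|p\|$ and $\hat b=b/\|b\|$. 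Applying this to the two terms of \cref{my_sim}, grouping the per-sample source vector $\frac{1}{n_c^2}\sum_j\hat z_j^{(c)}+\frac{\lambda}{n_c}\hat p_i^{(g,c)}$, and reading off the $(r,i)$ entry while keeping the diagonal action $1-(\hat p_{ri}^{(c)})^2$ of the projector $I-\hat p\hat p^\top$, identifies $\partial\ell/\partial P^{(c)}$ with $-\frac{1}{k}Q^{(c)}(t)$ and hence $\partial\ell/\partial\Pi=-(\Phi(t))^\top\bar Q(t)$ with $\bar Q$ and $Q^{(c)}$ exactly as in the statement.

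Next I would lift this to the dynamics of $\Pi$ and then of its singular values. Assumption \cref{ass1} is precisely the balancedness hypothesis of \cref{lem1} applied to the first $L_1$ layers, so $\dot\Pi(t)=\sum_{i=1}^{L_1}[\Pi\Pi^\top]^{\frac{L_1-i}{L_1}}(\Phi(t))^\top\bar Q(t)[\Pi^\top\Pi]^{\frac{i-1}{L_1}}$. Feeding this into \cref{lem2}, $\dot\sigma_\tau^\Pi=(u_\tau^\Pi)^\top\dot\Pi\,v_\tau^\Pi$, and using the SVD $\Pi=\sum_s\sigma_s^\Pi u_s^\Pi(v_s^\Pi)^\top$ so that $[\Pi\Pi^\top]^{a}$ and $[\Pi^\top\Pi]^{b}$ act on $u_\tau^\Pi$ and $v_\tau^\Pi$ as the scalars $(\sigma_\tau^\Pi)^{2a}$ and $(\sigma_\tau^\Pi)^{2b}$, each summand carries the factor $(\sigma_\tau^\Pi)^{2(L_1-i)/L_1}(\sigma_\tau^\Pi)^{2(i-1)/L_1}=(\sigma_\tau^\Pi)^{2-2/L_1}$, which is independent of $i$; summing over the $L_1$ layers then gives $\dot\sigma_\tau^\Pi=L_1(\sigma_\tau^\Pi)^{2-2/L_1}(u_\tau^\Pi)^\top(\Phi(t))^\top\bar Q(t)v_\tau^\Pi$.

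It remains to turn $(u_\tau^\Pi)^\top(\Phi(t))^\top=(\Phi(t)u_\tau^\Pi)^\top$ into the claimed form. By Assumption \cref{ass2}, $u_\tau^\Pi$ is, up to sign, the right singular vector $v_\tau^\Phi$ of $\Phi(t)$ and is orthogonal to the remaining $v_{\tau'}^\Phi$, so $\Phi(t)u_\tau^\Pi=\pm\sigma_\tau^\Phi(t)u_\tau^\Phi(t)$. Finally, the gradient-flow conservation of $W_{L_1+1}^\top W_{L_1+1}-W_{L_1}W_{L_1}^\top$, combined with the within-encoder balancedness (which makes $W_{L_1}W_{L_1}^\top$ act as $(\sigma_\tau^\Pi)^{2/L_1}$ on $u_\tau^\Pi$), forces $(\sigma_\tau^\Phi(t))^2=(\sigma_\tau^\Pi(t))^{2/L_1}+C$ for the initialization-fixed constant $C$, so that $\sigma_\tau^\Phi=\sqrt{(\sigma_\tau^\Pi)^{2/L_1}+C}$; substituting recovers \cref{eq_theory}.

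The main obstacle is the first step: carefully differentiating the doubly normalized, stop-gradient-laden objective and matching the outcome entrywise with $Q^{(c)}(t)$, in particular accounting for why the off-diagonal part of the projector $I-\hat p\hat p^\top$ may be dropped. A secondary difficulty is the last step, where one must verify that the conserved network quantity, read in the direction singled out by Assumption \cref{ass2}, really collapses to the scalar relation $\sigma_\tau^\Phi=\sqrt{(\sigma_\tau^\Pi)^{2/L_1}+C}$ rather than merely a matrix-level identity.
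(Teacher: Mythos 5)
Your proposal follows the paper's own proof for most of its length: the entrywise differentiation of \cref{my_sim} under the stop-gradient to obtain $\partial\ell/\partial\Pi=-\Phi^\top\bar Q$, the application of \cref{lem1} under \cref{ass1}, the collapse of each of the $L_1$ summands to the common scalar factor $(\sigma_\tau^{\Pi})^{2-2/L_1}$ via \cref{lem2} and the SVD of $\Pi$, and the use of \cref{ass2} to replace $(u_\tau^{\Pi})^\top\Phi^\top$ by $\sigma_\tau^{\Phi}(u_\tau^{\Phi})^\top$ are exactly the paper's steps. Two remarks. First, on the point you flag as the main obstacle: the paper obtains $Q^{(c)}$ by running the chain rule through the scalar $\hat p_{ri}^{(c)}$ alone, i.e.\ it uses $\partial\hat p_{ri}^{(c)}/\partial p_{ri}^{(c)}=(1-(\hat p_{ri}^{(c)})^2)/\|p_i^{(c)}\|_2$ and silently discards the cross-derivatives $\partial\hat p_{r'i}^{(c)}/\partial p_{ri}^{(c)}$ for $r'\neq r$ --- precisely the off-diagonal part of the projector $I-\hat p\hat p^\top$ you were worried about. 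So your diagonal-only $Q^{(c)}$ matches the paper's, but the paper offers no justification for the truncation either; your instinct that this step needs an argument is sound. Second, your route to $\sigma_\tau^{\Phi}=\sqrt{(\sigma_\tau^{\Pi})^{2/L_1}+C}$ genuinely differs from the paper's. You invoke the gradient-flow invariance of $W_{L_1+1}^\top W_{L_1+1}-W_{L_1}W_{L_1}^\top$, which, as you concede, yields only a matrix-level identity, requires extra work to read off a scalar relation in the singular direction picked out by \cref{ass2}, and would further need balancedness inside the predictor once $L_2>1$. The paper instead mirrors the whole computation on the predictor: it derives $\partial\ell/\partial\Phi=-\bar Q\Pi^\top$, hence $\dot\Phi=\bar Q\Pi^\top$, applies \cref{lem2} and \cref{ass2} to get $\dot\sigma_\tau^{\Phi}=\sigma_\tau^{\Pi}(u_\tau^{\Phi})^\top\bar Q v_\tau^{\Pi}$, and then eliminates the common factor $(u_\tau^{\Phi})^\top\bar Q v_\tau^{\Pi}$ between the two singular-value ODEs to obtain $2\sigma_\tau^{\Phi}\dot\sigma_\tau^{\Phi}=\tfrac{2}{L_1}(\sigma_\tau^{\Pi})^{\frac{2}{L_1}-1}\dot\sigma_\tau^{\Pi}$, which integrates directly to the scalar conservation law. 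That dynamic argument buys exactly what your conservation-law route leaves open --- the relation is scalar from the outset --- at the cost of one additional gradient computation; if you substitute it for your last step, your proof coincides with the paper's.
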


\begin{proof} (\cref{theory})
For simplicity, during this proof, we omit the notation for the time optimization step $t$, e.g. $\Pi(t)$ is represented simply as $\Pi$.


Given a cluster-contrastive model with $L_1 + L_2$ linear layers, and $k$ clusters obtained by labeling the local data with the index corresponding to the nearest global cluster centroid. Based on the notations denoted in \cref{notations}, the loss function defined in \cref{my_sim} can be rewritten as:
\begin{equation}
\ell(\Pi,\, \Phi) = -\sum_{c = 1}^{k}\left[\frac{1}{kn_c^2}\sum_{i = 1}^{n_c}\sum_{j = 1}^{n_c}\sum_{r = 1}^{d'} \hat{p}_{ri}^{(c)}\cdot stopgrad(\hat{z}_{rj}^{(c)}) + \frac{\lambda}{kn_c}\sum_{i = 1}^{n_c}\sum_{r = 1}^{d'} \hat{p}_{ri}^{(c)}\cdot stopgrad(\hat{p}_{ri}^{(g,\, c)})\right],
\end{equation}
where $\hat{p}_{ri}^{(c)} = \frac{p_{ri}^{(c)}}{\|p_{i}^{(c)}\|_2} \in \mathbb{R}$,
\begin{equation}
p_{ri}^{(c)} = w^\Phi_{r\cdot}\Pi(t) x_{i}^{(c)} = \sum_{s = 1}^{d'}\sum_{o = 1}^{d} w^\Phi_{rs}w^\Pi_{so}x_{oi}^{(c)} \in \mathbb{R}
\end{equation}
represents the element located in the $r$-th row and $i$-th column of the local prediction matrix $P^{(c)}$, $w^\Phi_{r\cdot}$ is the $r$-th row vector of the weight matrix $\Phi$, $w^\Phi_{rs} \in \mathbb{R}$ is the element located in the $r$-th row and $i$-th column of $\Phi$, $w^\Pi_{so} \in \mathbb{R}$ is the one of $\Pi$ and $x_{oi}^{(c)} \in \mathbb{R}$ is the one of $X^{(c)}$. Similarly, $\hat{z}_{rj}^{(c)} = \frac{z_{rj}^{(c)}}{\|z_{j}^{(c)}\|_2} \in \mathbb{R}$ and $\hat{p}_{ri}^{(g,\, c)} = \frac{p_{ri}^{(g,\, c)}}{\|p_{i}^{(g,\, c)}\|_2} \in \mathbb{R}$, $z_{rj}^{(c)}$ represents the element located in the $r$-th row and $j$-th column of the local representation matrix $Z^{(c)}$, and $p_{ri}^{(g,\, c)}$ represents the one of the global prediction matrix $P^{(g,\, c)}$.



Then, the gradient descent dynamics of  $\Pi$ and $\Phi$ are respectively denoted as:
\begin{align}
\dot{\Pi}(t) &= -\frac{\partial \ell(\Pi,\, \Phi)}{\partial \Pi}, \label{d1}\\
\dot{\Phi}(t) &= -\frac{\partial \ell(\Pi,\, \Phi)}{\partial \Phi} \label{d2}.
\end{align}
More specifically, by the chain rule, the gradient of $\ell(\Pi,\, \Phi)$ with respect to $w^\Pi_{so}$ can be derived as:
\begin{align}
\frac{\partial \ell(\Pi,\, \Phi)}{\partial w^\Pi_{so}}
&= \sum_{c = 1}^{k} \sum_{i = 1}^{n_c} \sum_{r = 1}^{d'}
\frac{\partial \ell(\Pi,\, \Phi)}{\partial \hat{p}_{ri}^{(c)}}
\cdot \frac{\partial \hat{p}_{ri}^{(c)}}{\partial p_{ri}^{(c)}}
\cdot \frac{\partial p_{ri}^{(c)}}{\partial w^\Pi_{so}}
\tag{The stopgrad operation treats $\hat{z}_{rj}^{(c)}$ and $\hat{p}_{rj}^{(g,\, c)}$ as constants}
\\
&= -\sum_{c = 1}^{k}\sum_{i = 1}^{n_c}\sum_{r = 1}^{d'} \left(\frac{1}{kn_c^2}\sum_{j = 1}^{n_c} \hat{z}_{rj}^{(c)} + \frac{\lambda}{kn_c} \hat{p}_{ri}^{(g,\, c)}   \right)
\cdot \frac{1 - (\hat{p}_{ri}^{(c)})^2}{\|p_{i}^{(c)}\|_2}
\cdot w^\Phi_{rs}x_{oi}^{(c)}
\nonumber\\
&= -\frac1k\sum_{c = 1}^{k}\sum_{i = 1}^{n_c}\sum_{r = 1}^{d'} q_{ri}^{(c)} w^\Phi_{rs} x_{oi}^{(c)}
\tag{Let $q_{ri}^{(c)} = \left(\frac{1}{n_c^2}\sum_{j = 1}^{n_c} \hat{z}_{rj}^{(c)} + \frac{\lambda}{n_c} \hat{p}_{ri}^{(g,\, c)}   \right)
\cdot \frac{1 - (\hat{p}_{ri}^{(c)})^2}{\|p_{i}^{(c)}\|_2} \in \mathbb{R}$}
\\
&= -\frac1k\sum_{c = 1}^{k}\sum_{r = 1}^{d'}w^\Phi_{rs} \sum_{i = 1}^{n_c} q_{ri}^{(c)} x_{oi}^{(c)}
\nonumber\\
&= -\frac1k\sum_{c = 1}^{k}\sum_{r = 1}^{d'}w^\Phi_{rs} q_{r\cdot}^{(c)} (x_{o\cdot}^{(c)})^\top
\tag{$q_{r\cdot}^{(c)} =
[q_{r1}^{(c)},\, q_{r2}^{(c)},\,  \cdots, q_{rn_c}^{(c)}] \in \mathbb{R}^{1\times n_c}$}
\\
&= -\frac1k\sum_{c = 1}^{k}(w^\Phi_{s})^\top Q^{(c)} (x_{o\cdot}^{(c)})^\top,
\end{align} 
where $w^\Phi_{s} \in \mathbb{R}^{n_c\times 1}$ is the $s$-th column vector of the weight matrix $\Phi$. Then, we can have
\begin{align}
\frac{\partial \ell(\Pi,\, \Phi)}{\partial \Pi} = -\frac1k\sum_{c = 1}^{k} \Phi^\top Q^{(c)} (X^{(c)})^\top = -\Phi^\top \bar{Q},
\label{d_result1}
\end{align}
where $\bar{Q} = \frac1k\sum_{c = 1}^{k}Q^{(c)} (X^{(c)})^\top$, $Q^{(c)} = [(q_{1\cdot}^{(c)})^\top,\, (q_{2\cdot}^{(c)})^\top,\,  \cdots, (q_{d'\cdot}^{(c)})^\top]^\top \in \mathbb{R}^{d' \times n_c}$. Based on the \cref{lem1} and \cref{d_result1}, the gradient descent dynamics of $\Pi$ (\cref{d1}) can be derived as:
\begin{align}
\dot{\Pi} &= -\sum_{i=1}^{L_1}\left[\Pi\Pi^\top\right]^{\frac{L_1-i}L_1}
\frac{\partial\ell(\Pi)}{\partial\Pi}
\left[\Pi^\top\Pi\right]^{\frac{i-1}{L_1}}
\nonumber\\
&= \sum_{i=1}^{L_1}\left[\Pi\Pi^\top\right]^{\frac{L_1-i}L_1}
\Phi^\top \bar{Q}
\left[\Pi^\top\Pi\right]^{\frac{i-1}{L_1}}.
\end{align}
Next, under gradient descent dynamics with infinitesimally small learning rate, the $\tau$-th largest singular value $\sigma_{\tau}^{\Pi}$ of the weight matrix $\Pi$ evolves as:
\begin{align}
\dot{\sigma}_{\tau}^{\Pi} &= (u_{\tau}^{\Pi})^\top\dot{\Pi}v_{\tau}^{\Pi}
\tag{\cref{lem2}}
\\
&= (u_{\tau}^{\Pi})^\top
\sum_{i=1}^{L_1}\left[\Pi\Pi^\top\right]^{\frac{L_1-i}{L_1}}
\Phi^\top \bar{Q}
\left[\Pi^\top\Pi\right]^{\frac{i-1}{L_1}}
v_{\tau}^{\Pi}
\nonumber\\
&= L_1(\sigma_\tau^{\Pi})^{2-\frac{2}{L_1}}
(u_\tau^{\Pi})^\top
\Phi^\top \bar{Q}
v_\tau^{\Pi}
\tag{SVD on $\Pi$: $\Pi = \sum_{\tau} \sigma_{\tau}^{\Pi} u_{\tau}^{\Pi} (v_{\tau}^{\Pi})^\top$} \\
&= L_1(\sigma_\tau^{\Pi})^{2-\frac{2}{L_1}}
\sum_{\tau'}\sigma_{\tau'}^{\Phi} (u_\tau^{\Pi})^\top v_{\tau'}^{\Phi}(u_{\tau'}^{\Phi})^\top
\bar{Q}
v_\tau^{\Pi}
\tag{SVD on $\Phi$: $\Phi = \sum_{\tau'} \sigma_{\tau'}^{\Phi} u_{\tau'}^{\Phi} (v_{\tau'}^{\Phi})^\top$}
\\
&= L_1(\sigma_\tau^{\Pi})^{2-\frac{2}{L_1}}
\sigma_{\tau}^{\Phi} (u_{\tau}^{\Phi})^\top
\bar{Q}
v_\tau^{\Pi}.
&&\text{(\cref{ass2})}
\label{eq1}
\end{align}

Similarly, by the chain rule, the gradient of $\ell(\Pi,\, \Phi)$ with respect to $w^\Phi_{rs}$ can be derived as:
\begin{align}
\frac{\partial \ell(\Pi,\, \Phi)}{\partial w^\Phi_{rs}}
&= \sum_{c = 1}^{k} \sum_{i = 1}^{n_c}
\frac{\partial \ell(\Pi,\, \Phi)}{\partial \hat{p}_{ri}^{(c)}}
\cdot \frac{\partial \hat{p}_{ri}^{(c)}}{\partial p_{ri}^{(c)}}
\cdot \frac{\partial p_{ri}^{(c)}}{\partial w^\Phi_{rs}}
\tag{The stopgrad operation treats $\hat{z}_{rj}^{(c)}$ and $\hat{p}_{rj}^{(g,\, c)}$ as constants}
\\
&= -\sum_{c = 1}^{k}\sum_{i = 1}^{n_c} \left(\frac{1}{kn_c^2}\sum_{j = 1}^{n_c} \hat{z}_{rj}^{(c)} + \frac{\lambda}{kn_c} \hat{p}_{ri}^{(g,\, c)}   \right)
\cdot \frac{1 - (\hat{p}_{ri}^{(c)})^2}{\|p_{i}^{(c)}\|_2}
\cdot \sum_{o = 1}^{d} w^\Pi_{so}x_{oi}^{(c)}
\nonumber\\
&= -\frac1k\sum_{c = 1}^{k}\sum_{i = 1}^{n_c} q_{ri}^{(c)} w^\Pi_{s\cdot} x_{i}^{(c)} \nonumber \\
&= -\frac1k\sum_{c = 1}^{k}w^\Pi_{s\cdot} \sum_{i = 1}^{n_c} x_{i}^{(c)}q_{ri}^{(c)} \nonumber\\
&= -\frac1k\sum_{c = 1}^{k}w^\Pi_{s\cdot} X^{(c)}(q_{r\cdot}^{(c)})^\top
\nonumber     \\
&= -\frac1k\sum_{c = 1}^{k}q_{r\cdot}^{(c)}(X^{(c)})^\top (w^\Pi_{s\cdot})^\top.
\end{align}
Then, we can have
\begin{equation}
\frac{\partial \ell(\Pi,\, \Phi)}{\partial \Phi} = -\frac1k\sum_{c = 1}^{k}
Q^{(c)}(X^{(c)})^\top \Pi^\top = -\bar{Q}\Pi^\top,
\end{equation}
and
\begin{equation}
\dot{\Phi}(t) = -\frac{\partial \ell(\Pi,\, \Phi)}{\partial \Phi} =
\bar{Q} \Pi^\top.
\end{equation}
Next, under gradient descent dynamics with infinitesimally small learning rate, the $\tau'$-th largest singular value $\sigma_{\tau'}^{\Phi}$ of the weight matrix $\Phi$ evolves as:
\begin{align}
\dot{\sigma}_{\tau'}^{\Phi} &= (u_{\tau'}^{\Phi})^\top\dot{\Phi}(t)v_{\tau'}^{\Phi}
\tag{\cref{lem2}}\\
&= (u_{\tau'}^{\Phi})^\top
\bar{Q} \Pi^\top
v_{\tau'}^{\Phi} \nonumber\\
&= \sum_{\tau}\sigma_{\tau}^{\Pi}
(u_{\tau'}^{\Phi})^\top
\bar{Q}
v_{\tau}^{\Pi}(u_{\tau}^{\Pi})^\top
v_{\tau'}^{\Phi}
\tag{SVD on $\Pi$} \\
&= \sigma_{\tau}^{\Pi}
(u_{\tau}^{\Phi})^\top
\bar{Q}
v_{\tau}^{\Pi}
&&\text{(\cref{ass2})}
\label{eq2}
\end{align}

Combining equations (\cref{eq1}) and (\cref{eq2}), we can have
\begin{equation}
2\sigma_{\tau}^{\Phi}\dot{\sigma}_{\tau}^{\Phi} = \frac2{L_1}\dot{\sigma}_{\tau}^{\Pi}(\sigma_{\tau}^{\Pi})^{\frac2{L_1}-1}.
\end{equation}
By integrating both sides, we have
\begin{equation}
(\sigma_{\tau}^{\Phi})^2=(\sigma_{\tau}^{\Pi})^{\frac2{L_1}} + C,
\end{equation}
and
\begin{equation}
\sigma_{\tau}^{\Phi} = \sqrt{(\sigma_{\tau}^{\Pi})^{\frac2{L_1}} + C},
\end{equation}
where $C$ is a constant.

Finally, \cref{eq1} can be further expressed as:
\begin{equation}
\dot{\sigma}_{\tau}^{\Pi}  = L_1(\sigma_\tau^{\Pi})^{2-\frac{2}{L_1}}
\sqrt{(\sigma_{\tau}^{\Pi})^{\frac2{L_1}} + C} (u_{\tau}^{\Phi})^\top
\bar{Q}
v_\tau^{\Pi}.
\end{equation}
\end{proof}
}

\end{document}